\documentclass[11pt]{article}

\usepackage[margin=1in]{geometry}
\usepackage{amsmath,amssymb}
\usepackage{amsthm}

\usepackage[noblocks]{authblk} % 'noblocks' option: always use the footnote mode 
\allowdisplaybreaks[3]

\usepackage[T1]{fontenc}
\usepackage[utf8]{inputenc}
\usepackage{graphicx}
\usepackage[font=small,margin=.05\textwidth]{caption} % Reduce the font size of figure captions.
\usepackage[font=scriptsize]{subcaption}
\usepackage[english]{babel}
\usepackage{verbatim}

\usepackage[commandnameprefix=ifneeded]{changes}
\setdeletedmarkup{{\color{gray}[cut: #1]}}

\theoremstyle{plain}
\newtheorem{thm}{\protect\theoremname}
\theoremstyle{plain}
\newtheorem{prop}[thm]{\protect\propositionname}
\theoremstyle{definition}

\theoremstyle{definition}
\newtheorem{defn}[thm]{\protect\definitionname}
\theoremstyle{plain}

\theoremstyle{plain}
\newtheorem{lem}[thm]{\protect\lemmaname}
\theoremstyle{plain}
\newtheorem*{lem*}{\protect\lemmaname}
\theoremstyle{plain}
\newtheorem*{prop*}{\protect\propositionname}
\theoremstyle{remark}
\newtheorem{rem}[thm]{\protect\remarkname}

\providecommand{\corollaryname}{Corollary}
\providecommand{\definitionname}{Definition}
\providecommand{\examplename}{Example}
\providecommand{\lemmaname}{Lemma}
\providecommand{\propositionname}{Proposition}
\providecommand{\theoremname}{Theorem}
\providecommand{\remarkname}{Remark}

\usepackage[authoryear,sort]{natbib}
\usepackage[unicode=true,
			bookmarks=true,
			bookmarksnumbered=false,
			bookmarksopen=false,
			breaklinks=false,
			pdfborder={0 0 1},
			backref=false,
			hypertexnames=false, % hypertexnames=false is needed for autonum
			colorlinks=false]{hyperref}

\usepackage{mathtools}
\usepackage{autonum}
\makeatletter
\newcommand{\restore@Environment}[1]{%
  \AtBeginDocument{%
  	\csletcs{#1*}{#1}%
  	\csletcs{end#1*}{end#1}%
  }%
}
\forcsvlist\restore@Environment{alignat,equation,gather,multline,flalign,align} 
\makeatother

\usepackage{algpseudocode}
\usepackage{algorithm}

\usepackage{acro}
\acsetup{
	single = true % If an acronym is used only once in the paper, spell it out and don't make an acronym
}

\usepackage{xcolor}
\definecolor{darkgreen}{rgb}{0,0.5,0}

\newcommand{\abs}[1]{\left\vert#1\right\vert}
\newcommand{\norm}[1]{\left\Vert#1\right\Vert}
\newcommand{\cnorm}[1]{\|#1\|}							% Norm in text style

\newcommand{\reals}{\mathbb{R}}

\newcommand{\deq}{\triangleq}							% Defined as

\newcommand{\E}{\mathbb{E}}								% Expectation
\newcommand{\PP}{\mathbb{P}}

\newcommand{\klmax}[2]{D_\mathrm{KL}^{\max}(#1 \mathrel{\Vert} #2)}

\newcommand{\eos}{\textrm{EOS}}  % end of sequence token
\newcommand{\piapp}{\hat{\pi}}  % approximate token-level policy
\newcommand{\Japp}{\hat{J}}  % approximate objective function

\newcommand{\Cov}{\operatorname{Cov}}
\newcommand{\Var}{\operatorname{Var}}
\newcommand{\tr}{\operatorname{tr}}
\newcommand{\score}{s_{\theta}}
\newcommand{\wscore}{\tilde{s}}

\newcommand{\cA}{\mathcal{A}}

\newcommand{\cD}{\mathcal{D}}

\newcommand{\cV}{\mathcal{V}}

\newcommand{\cX}{\mathcal{X}}
\newcommand{\cY}{\mathcal{Y}}

\DeclareAcronym{rl}{
	short = RL ,
	long  = reinforcement learning
}

\DeclareAcronym{mdp}{
	short      = MDP ,
	long       = Markov decision process ,
	long-plural-form = Markov decision processes ,
	short-indefinite = an
}

\DeclareAcronym{rloo}{
	short      = RLOO ,
	long       = REINFORCE leave-one-out ,
	short-indefinite = an
}

\DeclareAcronym{trpo}{
	short = TRPO ,
	long  = trust-region policy optimization
}

\DeclareAcronym{ppo}{
	short = PPO ,
	long  = proximal policy optimization
}

\DeclareAcronym{grpo}{
	short = GRPO ,
	long  = group relative policy optimization
}

\begin{document}

\title{Baselines and Score Centering in Policy Gradient Estimation}

\author[1]{Shuo Han\thanks{Email: \texttt{hanshuo@uic.edu}}}
\author[2]{Ziqing Hu\thanks{Email: \texttt{ziqing@perplexity.ai}}}
\affil[1]{University of Illinois Chicago}
\affil[2]{Perplexity AI}

\date{October 15, 2026}

\maketitle

\begin{abstract}
	Reinforcement learning has become a widely used approach to post-training large language models. In a typical implementation, learning is off-policy: The policy generating the responses differs from the policy being trained because of training--inference mismatch or policy staleness in asynchronous training. This discrepancy is commonly addressed by importance sampling, whose ratio is usually regularized in practice through truncation, masking, or clipping. Under regularized importance sampling, the importance-weighted policy gradient is no longer invariant to constant shifts in the reward. This invariance is recovered by using the covariance between the reward and the importance-weighted score as the estimand, which coincides with the policy gradient when the ratio is not regularized. This note develops this covariance perspective for both sequence-level and token-level importance sampling. The covariance perspective reveals the roles of the group-mean baseline and score centering in policy gradient estimation. Both perform centering in covariance estimation, but they differ in the mean being subtracted: The group-mean baseline uses a sample mean, whereas score centering uses a population mean. The two lead to distinct unbiased covariance estimators with different variances. Whether both forms of centering should be used simultaneously depends on whether importance sampling is applied at the sequence level or at the token level. With sequence-level importance sampling, score centering adds no benefit once the group-mean baseline is used; with token-level importance sampling, using the group-mean baseline without score centering generally introduces bias when the importance ratio is regularized.
\end{abstract}

% Reset acronyms so the main text treats them as first-use again
\acresetall

\section{Background}

\subsection{Post-training of language models}

In post-training of language models, the goal is to make the language model generate a desired response for a given prompt. Denote by $x\in\cX$ the prompt and $y\in\cY$ the response. The language model can be viewed as a policy $\pi$, where $\pi(y\mid x)$ denotes the probability of generating $y$ given the prompt $x$. The response $y$ consists of a sequence of tokens $\{y_{t}\}_{t\geq0}$, each of which is randomly generated based on past tokens and the prompt. The generation of one token is known as a \emph{decoding} step. We use $\pi$ both for the distribution over complete responses and, as in $\pi(y_{t}\mid x,y_{<t})$, for the conditional distribution of a single token given the prompt $x$ and the history $y_{<t}\deq(y_{0},\dots,y_{t-1})$ when it is clear from the context. Thus, for any response $y$, it holds that 
\[
\pi(y\mid x)=\prod_{t=0}^{|y|-1}\pi(y_{t}\mid x,y_{<t}).
\]
The quality of $y$ for a given $x$ is measured by a reward function $R\colon\mathcal{X}\times\mathcal{Y}\to\reals$ as $R(x,y)$. Let $\cD$ be the distribution of prompts. The goal is to find $\pi$ that maximizes 
\begin{equation}
	\E_{x\sim\mathcal{D},y\sim\pi(\cdot\mid x)}\left[R(x,y)\right].\label{eq:train_obj_seq_original}
\end{equation}
Throughout, we assume that generation terminates with probability one under every policy under consideration, so that $|y|<\infty$ almost surely and the expectation in~\eqref{eq:train_obj_seq_original} is well defined. In practice, this assumption is enforced by imposing a maximum response length.

\subsection{Token generation as \iacl*{mdp}\label{subsec:MDP}}

The token generation process can be modeled as an undiscounted indefinite-horizon \ac{mdp}. Let the state space $\mathcal{S}$ consist of sequences of tokens whose prefix is a valid prompt, and the action space $\mathcal{A}$ be the space of tokens. The idea is to use the state to store the prompt and all the tokens generated thus far, and to use the action to represent the next generated token:
\[
s_{t}\deq(x,y_{<t}),\qquad a_{t}\deq y_{t}.
\]
The initial state is chosen as $s_{0}=x$. At time $t$, the action $a_{t}$ is determined by the policy as $a_{t}\sim\pi(\cdot\mid s_{t})$. The transition dynamics are deterministic: $s_{t+1}=(x,y_{\leq t})$ is the concatenation of $s_{t}$ and $a_{t}$ unless $a_{t}=\eos$, the \emph{end-of-sequence} token. If $a_{t}=\eos$, the next state $s_{t+1}$ becomes an absorbing state associated with no reward. The original reward function $R$ induces a token-level reward $r\colon\mathcal{S}\times\mathcal{A}\to\reals$. During token generation, if $a_{t}=\eos$, then $r(s_{t},a_{t})=R(x,y_{\leq t})$, where $y_{\leq t}$ denotes the completed response including the terminal $\eos$ token, so that $|y|=t+1$; otherwise $r(s_{t},a_{t})=0$. In other words, the reward is sparse and assigned only at the last decoding step. Under the \ac{mdp} framework, the training objective in~\eqref{eq:train_obj_seq_original} can be equivalently rewritten as 
\[
\E_{x\sim\mathcal{D},y\sim\pi(\cdot\mid x)}\left[R(x,y)\right]=\E_{s_{0}\sim\mathcal{D}}^{\pi}\left[\sum_{t=0}^{\infty}r(s_{t},a_{t})\right].
\]
Define the state value, state-action value, and advantage functions of a policy $\pi$ by 
\begin{align*}
	V^{\pi}(s) & \deq\E^{\pi}\left[\left.\sum_{t=0}^{\infty}r(s_{t},a_{t})\,\right|\,s_{0}=s\right],\qquad Q^{\pi}(s,a)\deq\E^{\pi}\left[\left.\sum_{t=0}^{\infty}r(s_{t},a_{t})\,\right|\,s_{0}=s,a_{0}=a\right],\\
	A^{\pi}(s,a) & \deq Q^{\pi}(s,a)-V^{\pi}(s).
\end{align*}
In what follows, we assume that the policy is parametrized by $\theta\in\Theta$ for some set $\Theta$, and we denote the policy by $\pi_{\theta}$ to highlight the parameter dependence. The training objective in~\eqref{eq:train_obj_seq_original} thus becomes a function of $\theta$, defined by 
\begin{equation}
	J(\theta)\deq\E_{x\sim\mathcal{D},y\sim\pi_{\theta}(\cdot\mid x)}\left[R(x,y)\right].\label{eq:train_obj_seq}
\end{equation}

\section{Importance Sampling in Policy Optimization\label{sec:IS}}

The training objective in~\eqref{eq:train_obj_seq} involves an expectation over the prompt distribution $\cD$ and the distribution $\pi_{\theta}(\cdot\mid x)$ over possible responses. Because $\cD$ is unknown and the space of responses is prohibitively large to enumerate, it is intractable to compute the expectation exactly. Instead, the expectation is typically replaced by a sample-average approximation using sampled token sequences. A naive application of sample-average approximation to~\eqref{eq:train_obj_seq} requires these samples to be drawn from $\pi_{\theta}$.

However, modern post-training systems for reinforcement learning typically use separate engines with different design objectives. A \emph{training engine}, built on PyTorch FSDP~\citep{zhao_pytorch_2023} or Megatron-LM~\citep{shoeybi_megatron-lm_2019}, supports efficient gradient computation and stable updates to $\theta$, whereas an \emph{inference engine}, e.g., vLLM~\citep{kwon_efficient_2023} and SGLang~\citep{zheng_sglang_2024}, prioritizes generation throughput. Their distinct numerical implementations can yield different token probabilities even with identical policy parameters. This discrepancy is referred to as \emph{training--inference mismatch}~\citep{liu_when_2025}. For a given parameter $\theta$, let $\pi_{\theta}$ and $\piapp$ denote the policies implemented by the training and inference engines, respectively.

When samples are drawn from a different policy, importance sampling is a common technique for obtaining an unbiased estimate. It works by weighting the samples using an importance sampling ratio. Common formulations use two types of importance sampling ratios. One is sequence-level importance sampling, which uses 
\[
\frac{\pi_{\theta}(y\mid x)}{\piapp(y\mid x)}=\prod_{t=0}^{|y|-1}\frac{\pi_{\theta}(y_{t}\mid x,y_{<t})}{\piapp(y_{t}\mid x,y_{<t})}.
\]
The other is token-level importance sampling, which uses the individual ratios 
\[
\frac{\pi_{\theta}(y_{t}\mid x,y_{<t})}{\piapp(y_{t}\mid x,y_{<t})},\quad t=0,\dots,|y|-1
\]
rather than their product as a whole. As shown below, the two types of importance sampling ratios lead to different estimators, which are unbiased for different quantities. 

While sampling a response $y$ from $\pi_{\theta}$ must be done sequentially, scoring $\pi_{\theta}(y\mid x)$ when $y$ is given can be parallelized. Only one forward pass is required to compute all the $|y|$ quantities $\pi_{\theta}(y_{0}\mid x)$, $\pi_{\theta}(y_{1}\mid x,y_{0})$, \ldots, and this is typically much faster than sampling. Both formulations require the inference engine to return the token probabilities $\piapp(y_{t}\mid x,y_{<t})$ it used at decoding time.

\subsection{Methods using sequence-level importance sampling\label{subsec:sequence-level_IS}}

\subsubsection{REINFORCE}

The REINFORCE algorithm~\citep{williams_simple_1992} updates $\theta$ by performing gradient ascent on the training objective $J$ in~\eqref{eq:train_obj_seq}. It is well known \citep[see, e.g.,][]{sutton_reinforcement_2018} that the gradient is given by 
\begin{equation}
	\nabla J(\theta)=\E_{x\sim\mathcal{D},y\sim\pi_{\theta}(\cdot\mid x)}\left[R(x,y)\nabla_{\theta}\log\pi_{\theta}(y\mid x)\right].\label{eq:pg_seq}
\end{equation}
To use samples drawn from the inference policy $\piapp$, we rewrite~\eqref{eq:pg_seq} as 
\begin{equation}
	\nabla J(\theta)=\E_{x\sim\cD,y\sim\piapp(\cdot\mid x)}\left[\frac{\pi_{\theta}(y\mid x)}{\piapp(y\mid x)}R(x,y)\nabla_{\theta}\log\pi_{\theta}(y\mid x)\right],\label{eq:pg_seq_IS}
\end{equation}
assuming that $\piapp(y\mid x)>0$ whenever $\pi_{\theta}(y\mid x)>0$. For a given prompt $x$, draw samples $y_{1},\dots,y_{G}\sim\piapp(\cdot\mid x)$. From the samples, $\nabla J(\theta)$ can be approximated by a sample-average approximation as 
\begin{equation}
	\nabla J(\theta)\approx\frac{1}{G}\sum_{i=1}^{G}\frac{\pi_{\theta}(y_{i}\mid x)}{\piapp(y_{i}\mid x)}R(x,y_{i})\nabla_{\theta}\log\pi_{\theta}(y_{i}\mid x).\label{eq:pg_seq_saa}
\end{equation}
It can be shown that the right side of~\eqref{eq:pg_seq_saa} is an unbiased estimator of $\nabla J(\theta)$. 

\begin{rem}
	\label{rem:conditional}Strictly speaking, for a fixed prompt $x$, the estimator in~\eqref{eq:pg_seq_saa} is only unbiased for the inner expectation over $y\sim\piapp(\cdot\mid x)$ in~\eqref{eq:pg_seq_IS}. It is unbiased for $\nabla J(\theta)$ when the prompt $x$ is drawn from $\cD$. This is what happens in practice, where a batch of prompts is sampled and the per-prompt estimators are averaged. For brevity, we will assume that $x$ is drawn from $\cD$ in all subsequent discussions unless mentioned otherwise.
\end{rem}

\subsubsection{Policy gradient with a baseline\label{subsec:baseline}}

To reduce the variance of sample-average approximation, one important technique is introducing a baseline~\citep{greensmith_variance_2004} to the reward in $\nabla J(\theta)$. Let $b$ be a baseline, which may be random and may depend on the prompt $x$. It turns out that $\nabla J(\theta)$ remains unchanged after $b$ is introduced to $R(x,y)$, i.e., 
\[
\nabla J(\theta)=\E_{x\sim\cD,y\sim\piapp(\cdot\mid x)}\left[\frac{\pi_{\theta}(y\mid x)}{\piapp(y\mid x)}\left(R(x,y)-b\right)\nabla_{\theta}\log\pi_{\theta}(y\mid x)\right]
\]
provided that $b$ is conditionally independent of $y$ given $x$. This can be shown by recalling the fact 
\[
\E_{y\sim\piapp(\cdot\mid x)}\left[\frac{\pi_{\theta}(y\mid x)}{\piapp(y\mid x)}\nabla_{\theta}\log\pi_{\theta}(y\mid x)\right]=\E_{y\sim\pi_{\theta}(\cdot\mid x)}\left[\nabla_{\theta}\log\pi_{\theta}(y\mid x)\right]=0.
\]
If $b$ and $y$ are conditionally independent given $x$, the extra term contributed by the baseline becomes
\[
\E_{y\sim\piapp(\cdot\mid x)}\left[b\cdot\frac{\pi_{\theta}(y\mid x)}{\piapp(y\mid x)}\nabla_{\theta}\log\pi_{\theta}(y\mid x)\right]=b\cdot\E_{y\sim\piapp(\cdot\mid x)}\left[\frac{\pi_{\theta}(y\mid x)}{\piapp(y\mid x)}\nabla_{\theta}\log\pi_{\theta}(y\mid x)\right]=0.
\]
The use of a baseline can be extended to multiple samples. Let $y_{1},\dots,y_{G}\sim\piapp(\cdot\mid x)$ be drawn independently and let $b_{1},\dots,b_{G}$ be baselines. If $b_{i}$ is conditionally independent of $y_{i}$ given $x$ for $i=1,\dots,G$, it can be shown that the sample average 
\begin{equation}
	\frac{1}{G}\sum_{i=1}^{G}\left[\frac{\pi_{\theta}(y_{i}\mid x)}{\piapp(y_{i}\mid x)}\left(R(x,y_{i})-b_{i}\right)\nabla_{\theta}\log\pi_{\theta}(y_{i}\mid x)\right]\label{eq:rloo_seq}
\end{equation}
is an unbiased estimator of $\nabla J(\theta)$. Note that $b_{i}$ is allowed to depend on the other samples $\{y_{j}\}_{j\neq i}$ because those are independent of $y_{i}$ given $x$.

One popular choice of the baseline, as in the \ac{rloo} algorithm~\citep{kool_buy_2019}, is setting 
\begin{equation}
	b_{i}=\frac{1}{G-1}\sum_{j\neq i}R(x,y_{j}),\quad i=1,\dots,G,\label{eq:loo_baseline}
\end{equation}
which is the empirical mean reward of the group with the $i$th sample removed. Define 
\begin{equation}
	\bar{R}\deq\frac{1}{G}\sum_{j=1}^{G}R(x,y_{j}).\label{eq:group_mean}
\end{equation}
Using the fact 
\begin{equation}
	\frac{1}{G}\left(R(x,y_{i})-b_{i}\right)=\frac{1}{G-1}\left(R(x,y_{i})-\bar{R}\right),\label{eq:loo_identity}
\end{equation}
one can rewrite~\eqref{eq:rloo_seq} as 
\begin{equation}
	\frac{1}{G-1}\sum_{i=1}^{G}\left[\frac{\pi_{\theta}(y_{i}\mid x)}{\piapp(y_{i}\mid x)}\left(R(x,y_{i})-\bar{R}\right)\nabla_{\theta}\log\pi_{\theta}(y_{i}\mid x)\right].\label{eq:rloo_seq_mean_form}
\end{equation}

\begin{rem}
	\label{rem:drgrpo}Equation~\eqref{eq:loo_identity} shows that the leave-one-out baseline in~\eqref{eq:loo_baseline} is closely related to the group-mean baseline, defined by $b_{i}=\bar{R}$ for $i=1,\dots,G$. The group-mean baseline is used by the Dr.\ GRPO algorithm~\citep{liu_understanding_2025}, a variant of the \ac{grpo} algorithm~\citep{shao_deepseekmath_2024}, and leads to the following gradient estimator:
	\[
	\frac{1}{G}\sum_{i=1}^{G}\left[\frac{\pi_{\theta}(y_{i}\mid x)}{\piapp(y_{i}\mid x)}\left(R(x,y_{i})-\bar{R}\right)\nabla_{\theta}\log\pi_{\theta}(y_{i}\mid x)\right].
	\]
	Because $b_{i}=\bar{R}$ contains $R(x,y_{i})$, it is \emph{not} conditionally independent of $y_{i}$. However, the resulting gradient estimator only differs from the \ac{rloo} gradient estimator by a deterministic multiplicative factor $(G-1)/G$, which is absorbed into the learning rate in practice. In other words, Dr.\ GRPO is practically equivalent to the \ac{rloo} algorithm.
\end{rem}

\subsection{Methods using token-level importance sampling\label{subsec:token-level_IS}}

\subsubsection{\Acl*{trpo} and \acl*{ppo}}

Methods that use token-level importance sampling consider a surrogate objective function~\citep{kakade_approximately_2002} 
\[
\Japp(\theta)\deq\E_{s\sim d_{\piapp},a\sim\pi_{\theta}(\cdot\mid s)}\left[A^{\piapp}(s,a)\right],
\]
where 
\[
d_{\piapp}(s)=\sum_{t=0}^{\infty}\PP^{\piapp}\left(s_{t}=s\mid s_{0}\sim\cD\right)
\]
is the (unnormalized) state occupancy measure induced by $\piapp$ and the prompt distribution $\cD$. Because $d_{\piapp}$ is unnormalized, the slightly abused notation $s\sim d_{\piapp}$ denotes integration against this measure. We briefly recall why $\Japp$ is a useful surrogate function to optimize by examining the behavior of the training objective in the \emph{policy space}; namely, we will examine how $J(\theta)$ changes with the policy $\pi_{\theta}$ rather than $\theta$ itself. Define 
\[
V_{0}(\pi)\deq\E_{s_{0}\sim\cD}\left[V^{\pi}(s_{0})\right]=\E_{s_{0}\sim\cD}^{\pi}\left[\sum_{t=0}^{\infty}r(s_{t},a_{t})\right],
\]
which is the value of a given policy. The value of a parametrized policy $\pi_{\theta}$ recovers the training objective: $J(\theta)=V_{0}(\pi_{\theta})$. Denote by $\hat{V}_{0}(\cdot;\piapp)$ the linear approximation of $V_{0}$ at $\piapp$: 
\[
\hat{V}_{0}(\pi;\piapp)\deq V_{0}(\piapp)+DV_{0}(\piapp)\cdot\left(\pi-\piapp\right),
\]
where $DV_{0}(\piapp)$ denotes the derivative of $V_{0}$ at $\piapp$. It can be shown that
\begin{equation}
	\Japp(\theta)=\hat{V}_{0}(\pi_{\theta};\piapp)-V_{0}(\piapp)=DV_{0}(\piapp)\cdot(\pi_{\theta}-\piapp).\label{eq:Jhat_vs_V0}
\end{equation}
One can see from~\eqref{eq:Jhat_vs_V0} that $\Japp(\theta)$ is affine in $\pi_{\theta}$. Informally, $\Japp$ can be viewed as a linear approximation of $J$ in the policy space, up to an additive constant.

The relationship between $\Japp$ and $\hat{V}_{0}$ reveals why $\Japp$ is a useful surrogate objective to optimize. Because $\Japp(\theta)$ and $\hat{V}_{0}(\pi_{\theta};\piapp)$ differ only by a constant independent of $\theta$, maximizing $\Japp(\theta)$ is equivalent to maximizing $\hat{V}_{0}(\pi_{\theta};\piapp)$. If $\pi_{\theta}$ improves over $\piapp$ in $\hat{V}_{0}(\cdot;\piapp)$, i.e., $\hat{V}_{0}(\pi_{\theta};\piapp)>\hat{V}_{0}(\piapp;\piapp)=V_{0}(\piapp)$, this implies $DV_{0}(\piapp)\cdot(\pi_{\theta}-\piapp)>0$, indicating that $\pi_{\theta}-\piapp$ is an ascent direction in the policy space. When $\pi_{\theta}$ is sufficiently close to $\piapp$, it follows that $V_{0}(\pi_{\theta})>V_{0}(\piapp)$, i.e., $\pi_{\theta}$ improves over $\piapp$ in the true objective. In the constrained version of \ac{trpo}~\citep{schulman_trust_2015}, this condition is enforced by a trust-region constraint that restricts the divergence $\klmax{\pi_{\theta}}{\piapp}$.

To optimize $\Japp$ using a first-order method, one can obtain $\nabla\Japp(\theta)$ as 
\begin{equation}
	\nabla\Japp(\theta)=\E_{s\sim d_{\piapp},a\sim\pi_{\theta}(\cdot\mid s)}\left[A^{\piapp}(s,a)\nabla_{\theta}\log\pi_{\theta}(a\mid s)\right].\label{eq:trpo_grad}
\end{equation}
In comparison, it can be shown that $\nabla J(\theta)$ can be rewritten as
\[
\nabla J(\theta)=\E_{s\sim d_{\pi_{\theta}},a\sim\pi_{\theta}(\cdot\mid s)}\left[A^{\pi_{\theta}}(s,a)\nabla_{\theta}\log\pi_{\theta}(a\mid s)\right]
\]
under mild conditions. This shows that generally $\nabla\Japp(\theta)\neq\nabla J(\theta)$, i.e., $\hat{J}$ is not a linear approximation of $J$ in the \emph{parameter space}. To use samples from $\piapp$, we rewrite~\eqref{eq:trpo_grad} as 
\begin{align*}
	\nabla\Japp(\theta) & =\E_{s\sim d_{\piapp},a\sim\piapp(\cdot\mid s)}\left[\frac{\pi_{\theta}(a\mid s)}{\piapp(a\mid s)}A^{\piapp}(s,a)\nabla_{\theta}\log\pi_{\theta}(a\mid s)\right]\\
	& =\E_{s_{0}\sim\cD}^{\piapp}\left[\sum_{t=0}^{\infty}\frac{\pi_{\theta}(a_{t}\mid s_{t})}{\piapp(a_{t}\mid s_{t})}A^{\piapp}(s_{t},a_{t})\nabla_{\theta}\log\pi_{\theta}(a_{t}\mid s_{t})\right].
\end{align*}
The importance sampling ratio corrects the action distribution at each visited state, but the states are still distributed according to $d_{\piapp}$. Using the fact $s_{t}=(x,y_{<t})$ and $a_{t}=y_{t}$, we can further write 
\begin{equation}
	\nabla\Japp(\theta)=\E_{x\sim\cD,y\sim\piapp(\cdot\mid x)}\left[\sum_{t=0}^{|y|-1}\frac{\pi_{\theta}(y_{t}\mid x,y_{<t})}{\piapp(y_{t}\mid x,y_{<t})}A^{\piapp}((x,y_{<t}),y_{t})\nabla_{\theta}\log\pi_{\theta}(y_{t}\mid x,y_{<t})\right],\label{eq:trpo_grad_token}
\end{equation}
where the sum terminates at $t=|y|-1$ because the absorbing state reached after $\eos$ has no contribution. For a given prompt $x$, sample $y_{1},\dots,y_{G}\sim\piapp(\cdot\mid x)$, with which $\nabla\Japp(\theta)$ can be approximated as 
\[
\frac{1}{G}\sum_{i=1}^{G}\sum_{t=0}^{|y_{i}|-1}\frac{\pi_{\theta}(y_{i,t}\mid x,y_{i,<t})}{\piapp(y_{i,t}\mid x,y_{i,<t})}A^{\piapp}((x,y_{i,<t}),y_{i,t})\nabla_{\theta}\log\pi_{\theta}(y_{i,t}\mid x,y_{i,<t}).
\]

The surrogate objective $\Japp$ is also used in the \ac{ppo} algorithm~\citep{schulman_proximal_2017}. Compared with \ac{trpo}, \ac{ppo} replaces the trust-region constraint by a heuristic that involves clipping the (token-level) importance sampling ratio in $\Japp$ to a bounded interval:
\[
\E_{s\sim d_{\piapp},a\sim\piapp(\cdot\mid s)}\left[\min\left(\frac{\pi_{\theta}(a\mid s)}{\piapp(a\mid s)}A^{\piapp}(s,a),\ \mathrm{clip}\left(\frac{\pi_{\theta}(a\mid s)}{\piapp(a\mid s)},1-\epsilon,1+\epsilon\right)A^{\piapp}(s,a)\right)\right]
\]
for some $\epsilon>0$. The heuristic aims to discourage the updated policy from leaving the trust region by limiting the amount of improvement made by the updated policy. 

\subsubsection{Token-level GRPO}

In language model post-training, the value of $A^{\piapp}((x,y_{i,<t}),y_{i,t})$ is often difficult to evaluate or estimate, as it would require learning a value function over prefixes. To avoid evaluating $A^{\piapp}$, one can use the fact 
\begin{equation}
	\nabla\Japp(\theta)=\E_{x\sim\cD,y\sim\piapp(\cdot\mid x)}\left[R(x,y)\sum_{t=0}^{|y|-1}\frac{\pi_{\theta}(y_{t}\mid x,y_{<t})}{\piapp(y_{t}\mid x,y_{<t})}\nabla_{\theta}\log\pi_{\theta}(y_{t}\mid x,y_{<t})\right]\label{eq:trpo_grad_token_reward}
\end{equation}
to use $R$ in place of $A^{\piapp}$ (see, e.g., \citet{schulman_high-dimensional_2016}). A baseline can be incorporated similarly to what has been done in Section~\ref{subsec:baseline}. For any baseline $b$ that is conditionally independent of $y$ given $x$, it can be shown that 
\[
\nabla\Japp(\theta)=\E_{x\sim\cD,y\sim\piapp(\cdot\mid x)}\left[\left(R(x,y)-b\right)\sum_{t=0}^{|y|-1}\frac{\pi_{\theta}(y_{t}\mid x,y_{<t})}{\piapp(y_{t}\mid x,y_{<t})}\nabla_{\theta}\log\pi_{\theta}(y_{t}\mid x,y_{<t})\right].
\]
Suppose samples $y_{1},\dots,y_{G}\sim\piapp(\cdot\mid x)$ are available. Define $b_{i}$ as in~\eqref{eq:loo_baseline} for $i=1,\dots,G$, and $\bar{R}$ as in~\eqref{eq:group_mean}. It can be shown that 
\begin{equation}
	\frac{1}{G-1}\sum_{i=1}^{G}\left[\left(R(x,y_{i})-\bar{R}\right)\sum_{t=0}^{|y_{i}|-1}\frac{\pi_{\theta}(y_{i,t}\mid x,y_{i,<t})}{\piapp(y_{i,t}\mid x,y_{i,<t})}\nabla_{\theta}\log\pi_{\theta}(y_{i,t}\mid x,y_{i,<t})\right]\label{eq:token_grpo}
\end{equation}
is an unbiased estimator of $\nabla\Japp(\theta)$. 

\begin{rem}
	\label{rem:clipping}
	The estimator in~\eqref{eq:token_grpo} is the token-level counterpart of~\eqref{eq:rloo_seq_mean_form}. In practice, a clipping heuristic is sometimes applied to the token-level importance ratio in~\eqref{eq:token_grpo} to further reduce variance at the price of introducing bias, so that the gradient estimator becomes
	\[
	\frac{1}{G-1}\sum_{i=1}^{G}\left[\left(R(x,y_{i})-\bar{R}\right)\sum_{t=0}^{|y_{i}|-1}\mathrm{clip}\left(\frac{\pi_{\theta}(y_{i,t}\mid x,y_{i,<t})}{\piapp(y_{i,t}\mid x,y_{i,<t})},1-\epsilon,1+\epsilon\right)\nabla_{\theta}\log\pi_{\theta}(y_{i,t}\mid x,y_{i,<t})\right].
	\]
	The bias of the estimator will be discussed in Section~\ref{subsec:token}.
\end{rem}

\subsection{Comparison between sequence-level and token-level importance sampling\label{subsec:comparison}}

The results in Sections~\ref{subsec:sequence-level_IS} and~\ref{subsec:token-level_IS} show that the two importance sampling formulations yield unbiased gradient estimators for different objective functions: the sequence-level estimators target $\nabla J(\theta)$, whereas the token-level estimators target $\nabla\Japp(\theta)$. This difference, together with the different numerical behavior of importance sampling ratios, accounts for the trade-offs discussed below. 

The importance sampling ratio accounts for the main drawback of sequence-level importance sampling. Because $\pi_{\theta}(y\mid x)/\piapp(y\mid x)$ is a product of $|y|$ token-level ratios, even when the per-token discrepancy in the ratios is small, these discrepancies accumulate along the sequence. For long agentic traces, this accumulation can cause sequence-level ratios to span several orders of magnitude. As a result, computing the sequence-level ratio may lead to numerical instability. More importantly, the variance of the corresponding gradient estimator can become enormous. This is because the sample average may be dominated by a small number of samples with large importance sampling ratios, thus reducing the effective sample size of a batch~\citep{kong_note_1992,liu_monte_2001}. 

In comparison, the token-level ratios are better behaved numerically. This is, however, at the cost that the gradient estimator only corresponds to a surrogate $\Japp$ at the current inference policy. At step $k$ of policy optimization, the policy improvement guarantee only ensures that the updated training policy $\pi_{\theta_{k+1}}$ is better than the current inference policy $\piapp_{k}$, i.e., $V_{0}(\pi_{\theta_{k+1}})>V_{0}(\piapp_{k})$. Due to the training--inference mismatch, $\piapp_{k}$ generally differs from the current training policy $\pi_{\theta_{k}}$ and can be worse, i.e., $V_{0}(\piapp_{k})<V_{0}(\pi_{\theta_{k}})$. Consequently, it may occur that $V_{0}(\pi_{\theta_{k+1}})<V_{0}(\pi_{\theta_{k}})$, thus defeating the purpose of the policy update. One apparent remedy is to reduce the mismatch between $\piapp_{k}$ and $\pi_{\theta_{k}}$ and consequently $|V_{0}(\piapp_{k})-V_{0}(\pi_{\theta_{k}})|$ by choosing how the two engines are implemented and how the parameters are synchronized. 

\section{Policy Gradient as a Covariance\label{sec:covariance}}

This section shows that both the policy gradient $\nabla J(\theta)$ and the approximate policy gradient $\nabla\Japp(\theta)$ can be viewed as a \emph{covariance}. This interpretation of the policy gradient yields two main results. First, when the importance sampling ratio is regularized, e.g., by truncation, masking, or clipping (as in Remark~\ref{rem:clipping}), the covariance form recovers invariance to constant shifts in the reward, a key property enjoyed by standard importance sampling. Second, the covariance form admits two distinct unbiased estimators, which correspond to the group-relative methods, including \ac{rloo} and Dr.\ GRPO, in Section~\ref{sec:IS} and \emph{score centering}~\citep{marek_score_2026}, respectively. Comparing the variances of these two estimators provides a guideline for choosing between the two algorithms for estimating the policy gradient. Throughout this section, the prompt $x$ is fixed. As discussed in Remark~\ref{rem:conditional}, the per-prompt results extend to the full objective by drawing $x$ from $\cD$.

\subsection{Unbiased covariance estimation\label{subsec:estimators}}

To prepare for subsequent discussions on sample-average approximation of the policy gradient, this subsection briefly reviews the estimation of a covariance from samples. The results are applied in Sections~\ref{subsec:seq} and~\ref{subsec:token}.

Let $(X_{1},Y_{1}),\dots,(X_{n},Y_{n})$ be independent copies of a pair $(X,Y)$ with $X\in\reals$ and $Y\in\reals^{d}$, matching the setting in which $X$ is a scalar reward and $Y$ a score. Write $\mu_{X}\deq\E[X]$,\ $\mu_{Y}\deq\E[Y]$, and
\[
\bar{X}\deq\frac{1}{n}\sum_{i=1}^{n}X_{i},\qquad
\bar{Y}\deq\frac{1}{n}\sum_{i=1}^{n}Y_{i},\qquad
\tilde{X}\deq X-\mu_{X},\qquad \tilde{Y}\deq Y-\mu_{Y}.
\]
The estimand is the covariance vector $\sigma_{XY}\deq\Cov(X,Y)=\E[\tilde{X}\tilde{Y}]\in\reals^{d}$. Define the moments
\begin{equation}
	\mu_{r0}\deq\E\!\left[\tilde{X}^{\,r}\right],\qquad
	\mu_{r1}\deq\E\!\left[\tilde{X}^{\,r}\tilde{Y}\right]\in\reals^{d},\qquad
	\mu_{r2}\deq\E\!\left[\tilde{X}^{\,r}\cnorm{\tilde{Y}}^{2}\right].
	\label{eq:moments}
\end{equation}
In particular, we have $\mu_{11}=\sigma_{XY}$, $\mu_{20}=\Var(X)$, and $\mu_{02}=\E\cnorm{\tilde{Y}}^{2}=\tr\Cov(Y)$. For an estimator $\hat{\sigma}$ of $\sigma_{XY}$, we write $\Var(\hat{\sigma})\deq\E\norm{\hat{\sigma}-\E\hat{\sigma}}^{2}$, the sum of the variances of individual coordinates, which for an unbiased estimator is its mean squared error.

\begin{defn}[Sample-mean estimators]\label{def:sm}
	For $n\geq2$, define
	\begin{equation}
		\begin{aligned}
			\hat{\sigma}_{1}&\deq\frac{1}{n-1}\sum_{i=1}^{n}\left(X_{i}-\bar{X}\right)\left(Y_{i}-\bar{Y}\right),\\
			\hat{\sigma}_{2}&\deq\frac{1}{n-1}\sum_{i=1}^{n}\left(X_{i}-\bar{X}\right)Y_{i},\\
			\hat{\sigma}_{3}&\deq\frac{1}{n-1}\sum_{i=1}^{n}X_{i}\left(Y_{i}-\bar{Y}\right).
		\end{aligned}
		\label{eq:sm_forms}
	\end{equation}
\end{defn}
The following shows that these three are the same unbiased estimator of $\sigma_{XY}$.

\begin{prop}[Equivalence among sample-mean estimators]\label{prop:onesided}
	$\hat{\sigma}_{1}=\hat{\sigma}_{2}=\hat{\sigma}_{3}$ for every realization of the sample. More generally, for any constants $\alpha\in\reals$ and $\beta\in\reals^{d}$,
	\begin{equation}
		\frac{1}{n-1}\sum_{i=1}^{n}\left(X_{i}-\bar{X}\right)\left(Y_{i}-\beta\right)
		=\frac{1}{n-1}\sum_{i=1}^{n}\left(X_{i}-\alpha\right)\left(Y_{i}-\bar{Y}\right)
		=\hat{\sigma}_{1},
		\label{eq:onesided}
	\end{equation}
	and $\E\,\hat{\sigma}_{1}=\sigma_{XY}$.
\end{prop}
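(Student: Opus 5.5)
The plan is to prove the algebraic identities first and then unbiasedness, using only the fact that deviations from the sample mean sum to zero. For any constant $\beta\in\reals^{d}$ we have $\sum_{i=1}^{n}(X_{i}-\bar{X})=0$, and hence
\[
\sum_{i=1}^{n}\left(X_{i}-\bar{X}\right)\left(Y_{i}-\beta\right)=\sum_{i=1}^{n}\left(X_{i}-\bar{X}\right)Y_{i}-\Bigl(\sum_{i=1}^{n}\left(X_{i}-\bar{X}\right)\Bigr)\beta=\sum_{i=1}^{n}\left(X_{i}-\bar{X}\right)Y_{i}.
\]
So the left side does not depend on $\beta$. Taking $\beta=\bar{Y}$ gives $\hat{\sigma}_{1}$, and taking $\beta=0$ gives $\hat{\sigma}_{2}$. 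Symmetrically, $\sum_{i}(Y_{i}-\bar{Y})=0$ shows that $\sum_{i}(X_{i}-\alpha)(Y_{i}-\bar{Y})$ does not depend on $\alpha$. Taking $\alpha=\bar{X}$ gives $\hat{\sigma}_{1}$, and taking $\alpha=0$ gives $\hat{\sigma}_{3}$. One subtlety is that $\bar{Y}$ and $\bar{X}$ are random, not constants. The cancellation argument holds for any vector that does not depend on $i$, so it covers them as well. This establishes~\eqref{eq:onesided} and the pointwise equality $\hat{\sigma}_{1}=\hat{\sigma}_{2}=\hat{\sigma}_{3}$.

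For unbiasedness, I will use the freedom just shown to replace $X_{i},Y_{i}$ by their centered versions $\tilde{X}_{i}\deq X_{i}-\mu_{X}$ and $\tilde{Y}_{i}\deq Y_{i}-\mu_{Y}$. This works because $X_{i}-\bar{X}=\tilde{X}_{i}-\bar{\tilde{X}}$, and likewise for $Y$. Expanding then gives
\[
\sum_{i=1}^{n}\left(\tilde{X}_{i}-\bar{\tilde{X}}\right)\left(\tilde{Y}_{i}-\bar{\tilde{Y}}\right)=\sum_{i=1}^{n}\tilde{X}_{i}\tilde{Y}_{i}-n\,\bar{\tilde{X}}\,\bar{\tilde{Y}}.
\]
The first sum has expectation $n\sigma_{XY}$. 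For the second term, write $n\bar{\tilde{X}}\bar{\tilde{Y}}=\frac{1}{n}\sum_{i,j}\tilde{X}_{i}\tilde{Y}_{j}$. By independence, the cross terms with $i\neq j$ have expectation $\E[\tilde{X}_{i}]\E[\tilde{Y}_{j}]=0$. The $n$ diagonal terms each have expectation $\sigma_{XY}$, so this second term has expectation $\sigma_{XY}$. The total expectation is therefore $(n-1)\sigma_{XY}$, and dividing by $n-1$ gives $\E\hat{\sigma}_{1}=\sigma_{XY}$.

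There is no real obstacle here. The only care needed is to state the integrability assumption, namely that $\E|X|\,\cnorm{Y}<\infty$ together with finite first moments, so that all the expectations are defined. The steps are otherwise routine.
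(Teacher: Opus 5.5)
Your proposal is correct and follows essentially the same route as the paper: the one-sided identities come from $\sum_i (X_i-\bar{X})=0$ and $\sum_i (Y_i-\bar{Y})=0$, and unbiasedness comes from expanding into $\sum_i X_iY_i - n\bar{X}\bar{Y}$ and taking expectations. The only cosmetic difference is that you first center at $\mu_X,\mu_Y$ and count the diagonal terms of the double sum, whereas the paper works with the uncentered variables and uses $\E[\bar{X}\bar{Y}]=\sigma_{XY}/n+\mu_X\mu_Y$ directly.
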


\begin{proof}
	For the first equality in~\eqref{eq:onesided},
	\[
	\sum_{i}\left(X_{i}-\bar{X}\right)\left(Y_{i}-\beta\right)
	=\sum_{i}\left(X_{i}-\bar{X}\right)\left(Y_{i}-\bar{Y}\right)
	+\Bigl(\sum_{i}\left(X_{i}-\bar{X}\right)\Bigr)\left(\bar{Y}-\beta\right),
	\]
	and $\sum_{i}(X_{i}-\bar{X})=0$. The second equality is symmetric, using $\sum_{i}(Y_{i}-\bar{Y})=0$. Setting $\beta=0$ and $\alpha=0$ recovers $\hat{\sigma}_{2}$ and $\hat{\sigma}_{3}$, respectively. For unbiasedness, expand $\sum_{i}(X_{i}-\bar{X})(Y_{i}-\bar{Y})=\sum_{i}X_{i}Y_{i}-n\bar{X}\bar{Y}$ and use $\E[X_{i}Y_{i}]=\sigma_{XY}+\mu_{X}\mu_{Y}$ together with $\E[\bar{X}\bar{Y}]=\Cov(\bar{X},\bar{Y})+\mu_{X}\mu_{Y} =\sigma_{XY}/n+\mu_{X}\mu_{Y}$. The expectation of the sum is thus $(n-1)\sigma_{XY}$.
\end{proof}

Proposition~\ref{prop:onesided} reveals the following two facts. First, it suffices to center either $X$ or $Y$, and performing both has no further effect. Second, knowledge of the population mean $\mu_{Y}$ is of no use when $\bar{X}$ is used for centering because setting $\beta=\mu_{Y}$ in~\eqref{eq:onesided} leads to the same estimator.

In comparison, when $\bar{X}$ is not used, replacing $\bar{Y}$ in $\hat{\sigma}_{3}$ by $\mu_{Y}$ yields a different estimator as defined below.

\begin{defn}[Known-mean estimator]\label{def:km}
	Suppose $\mu_{Y}$ is known. For $n\geq1$, define
	\begin{equation}
		\hat{\sigma}_{\mu_{Y}}\deq\frac{1}{n}\sum_{i=1}^{n}X_{i}\left(Y_{i}-\mu_{Y}\right).
		\label{eq:km0}
	\end{equation}
\end{defn}
The sample-mean estimator and the known-mean estimator differ in how many samples they require. The known-mean estimator is defined at $n=1$, whereas the sample-mean estimator requires $n > 1$. Similarly to what is done in~\eqref{eq:onesided}, we consider a generalization of $\hat{\sigma}_{\mu_{Y}}$ by incorporating a centering constant $b$ for $X$. The constant $b$ is referred to as a baseline since it plays the role of a baseline reward in policy gradient.

\begin{defn}[Known-mean estimator with a baseline]\label{def:kmb}
	For $n\geq1$ and any constant $b\in\reals$, define
	\begin{equation}
		\hat{\sigma}_{\mu_{Y}}(b)\deq\frac{1}{n}\sum_{i=1}^{n}\left(X_{i}-b\right)\left(Y_{i}-\mu_{Y}\right).
		\label{eq:km}
	\end{equation}
\end{defn}

It follows immediately that $\hat{\sigma}_{\mu_{Y}}=\hat{\sigma}_{\mu_{Y}}(0)$. The following shows that this estimator is unbiased for any $b$.

\begin{prop}\label{prop:freeb}
	$\E\,\hat{\sigma}_{\mu_{Y}}(b)=\sigma_{XY}$ for every $n\geq1$ and every constant $b$.
\end{prop}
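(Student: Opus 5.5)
The plan is to use linearity of expectation together with the fact that the centered score $Y_{i}-\mu_{Y}$ has mean zero. Since $\hat{\sigma}_{\mu_{Y}}(b)$ is an average of $n$ identically distributed terms $(X_{i}-b)(Y_{i}-\mu_{Y})$, it suffices to show that a single term has expectation $\sigma_{XY}$. Independence across samples plays no role, which is why the result holds already at $n=1$.

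For a single pair $(X,Y)$, I will split the term as
\[
(X-b)(Y-\mu_{Y})=(X-\mu_{X})(Y-\mu_{Y})+(\mu_{X}-b)(Y-\mu_{Y})=\tilde{X}\tilde{Y}+(\mu_{X}-b)\tilde{Y}.
\]
Taking expectations, the first piece gives $\E[\tilde{X}\tilde{Y}]=\sigma_{XY}$ by definition. The second piece gives $(\mu_{X}-b)\E[\tilde{Y}]=0$, because $b$ is a deterministic constant and $\E[\tilde{Y}]=0$. Averaging over $i$ then yields $\E\,\hat{\sigma}_{\mu_{Y}}(b)=\sigma_{XY}$ for every $n\geq1$ and every $b$.

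There is no real obstacle; the only points needing care are the hypotheses. The argument requires $b$ to be a constant, or at least independent of $Y_{i}$, so that $\E[(\mu_{X}-b)\tilde{Y}]$ factorizes. This is exactly where it contrasts with the sample-mean estimators of Proposition~\ref{prop:onesided}, where $\bar{X}$ depends on $Y_{i}$ and a factor $1/(n-1)$ is needed. The argument also requires the first moments and $\E[XY]$ to be finite so that all expectations are well defined, which is implicit in the definition of $\sigma_{XY}$.
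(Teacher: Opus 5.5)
Your proof is correct and matches the paper's: it also writes $X_{i}-b=\tilde{X}_{i}+(\mu_{X}-b)$ and uses $\E[\tilde{Y}_{i}]=0$ to reduce each summand's expectation to $\E[\tilde{X}_{i}\tilde{Y}_{i}]=\sigma_{XY}$. Your side remarks, that independence across samples is not needed and that a random $b$ independent of $Y_{i}$ would also work, are accurate extras the paper does not state.
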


\begin{proof}
	Writing $X_{i}-b=\tilde{X}_{i}+(\mu_{X}-b)$ and using $\E[\tilde{Y}_{i}]=0$,
	\[
	\E\left[\left(X_{i}-b\right)\tilde{Y}_{i}\right]
	=\E\left[\tilde{X}_{i}\tilde{Y}_{i}\right]
	+\left(\mu_{X}-b\right)\E\left[\tilde{Y}_{i}\right]=\sigma_{XY}. \qedhere
	\]
\end{proof}

Although both $\hat{\sigma}_{1}$ and $\hat{\sigma}_{\mu_{Y}}(b)$ are unbiased, their variances are different.

\begin{prop}[Variances of the estimators]\label{prop:var}
	Define the residual mean $m\deq\mu_{X}-b$ and
	\begin{equation}
		q(m)\deq m^{2}\mu_{02}+2m\,\mu_{12}.
		\label{eq:qdef}
	\end{equation}
	\emph{(i)} For every $n\geq1$,
	\begin{equation}
		\Var\!\left(\hat{\sigma}_{\mu_{Y}}(b)\right)
		=\frac{\mu_{22}-\norm{\mu_{11}}^{2}+q(m)}{n}.
		\label{eq:var_km}
	\end{equation}
	\emph{(ii)} For every $n\geq2$,
	\begin{equation}
		\Var\!\left(\hat{\sigma}_{1}\right)
		=\frac{\mu_{22}-\norm{\mu_{11}}^{2}}{n}
		+\frac{\norm{\mu_{11}}^{2}+\mu_{20}\mu_{02}}{n(n-1)} .
		\label{eq:var_sm}
	\end{equation}
\end{prop}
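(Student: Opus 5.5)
Part (i) reduces to a single-sample computation, and part (ii) to an expansion into centered variables followed by the standard variance calculation for a U-statistic-like sum.

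\emph{Part (i).} The summands $Z_i\deq(X_i-b)\tilde{Y}_i$ are i.i.d., so $\Var(\hat{\sigma}_{\mu_Y}(b))=\Var(Z_1)/n$. By Proposition~\ref{prop:freeb}, $\E Z_1=\mu_{11}$, hence $\Var(Z_1)=\E\norm{Z_1}^2-\norm{\mu_{11}}^2$. Write $X_1-b=\tilde{X}_1+m$. Then
\[
\E\norm{Z_1}^2=\E\bigl[(\tilde{X}_1+m)^2\cnorm{\tilde{Y}_1}^2\bigr]=\mu_{22}+2m\mu_{12}+m^2\mu_{02},
\]
which gives \eqref{eq:var_km} directly.

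\emph{Part (ii).} Proposition~\ref{prop:onesided} shows that $\hat{\sigma}_1$ is invariant under shifting $X$ and $Y$ by constants. So I may assume $\mu_X=0$ and $\mu_Y=0$ and write $X,Y$ for the centered variables. Using $\sum_i(X_i-\bar X)(Y_i-\bar Y)=\sum_i X_iY_i-n\bar X\bar Y$, I rewrite
\[
(n-1)\hat{\sigma}_1=\Bigl(1-\tfrac1n\Bigr)\sum_i X_iY_i-\tfrac1n\sum_{i\neq j}X_iY_j .
\]
Set $A\deq\sum_i X_iY_i$ and $B\deq\sum_{i\ne j}X_iY_j$. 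Then $(n-1)\hat{\sigma}_1=\frac{n-1}{n}A-\frac1n B$, with $\E A=n\mu_{11}$ and $\E B=0$.

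\begin{itemize}
\item Since the $X_iY_i$ are i.i.d., $\Var(A)=n(\mu_{22}-\norm{\mu_{11}}^2)$.
\item The cross term vanishes: $\E\langle A-\E A,B\rangle=\sum_k\sum_{i\ne j}\E\langle X_kY_k,X_iY_j\rangle$, and every term has an index $i$ or $j$ that appears only once. Centering plus independence then kills it.
\item For $\E\norm{B}^2=\sum_{i\ne j}\sum_{k\ne l}\E\langle X_iY_j,X_kY_l\rangle$, only two pairings survive. If $(k,l)=(i,j)$, the term is $\E[X_i^2]\,\E\norm{Y_j}^2=\mu_{20}\mu_{02}$. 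If $(k,l)=(j,i)$, the term is $\E\langle X_iY_i,X_jY_j\rangle=\norm{\mu_{11}}^2$. Any other index pattern leaves a lone centered factor. Hence $\E\norm{B}^2=n(n-1)(\mu_{20}\mu_{02}+\norm{\mu_{11}}^2)$.
\end{itemize}

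Combining these,
\[
(n-1)^2\Var(\hat\sigma_1)=\frac{(n-1)^2}{n}(\mu_{22}-\norm{\mu_{11}}^2)+\frac{n-1}{n}(\mu_{20}\mu_{02}+\norm{\mu_{11}}^2).
\]
Dividing by $(n-1)^2$ yields \eqref{eq:var_sm}.

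The main obstacle is the index bookkeeping in $\E\norm{B}^2$ and in the cross term. One has to check carefully that every index configuration other than the two pairings contains an index appearing exactly once, attached to a mean-zero factor $X$ or $Y$, so that its expectation vanishes by independence.
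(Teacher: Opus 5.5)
Your proof is correct. Part (i) is the paper's argument exactly.

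Part (ii) takes a different route. The paper works coordinatewise: it quotes the classical exact variance of the scalar sample covariance, $\bigl(\mu_{22}^{(k)}-\sigma_{XY_k}^{2}\bigr)/n+\bigl(\sigma_{XY_k}^{2}+\mu_{20}\sigma_{Y_k}^{2}\bigr)/(n(n-1))$, and sums over $k$. You instead derive the formula from scratch in vector form, splitting $(n-1)\hat{\sigma}_{1}$ into the diagonal sum $A$ and the off-diagonal sum $B$ and then counting pairings. The index bookkeeping checks out. Since $i\neq j$, every term of the cross term has an index appearing once, so it vanishes. In $\E\norm{B}^{2}$ only $(k,l)=(i,j)$ and $(k,l)=(j,i)$ survive, and these give the stated constants.

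The paper's version is shorter but leans on an external formula it does not prove. Yours is self-contained, avoids coordinate bookkeeping, and contains the scalar formula as the case $d=1$.

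Your decomposition also explains a later remark in the paper. In centered variables your identity reads $\hat{\sigma}_{1}=\hat{\sigma}_{\mu_{Y}}(\mu_{X})-\frac{1}{n(n-1)}B$, where $A/n=\hat{\sigma}_{\mu_{Y}}(\mu_{X})$ and $B$ is uncorrelated with $A$. So the excess variance $(\norm{\mu_{11}}^{2}+\mu_{20}\mu_{02})/(n(n-1))$ of the sample-mean estimator over the known-mean estimator at $m=0$ is exactly the variance of an orthogonal correction term.

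One small citation point. Shift-invariance of $\hat{\sigma}_{1}$ follows directly from its definition, because the deviations $X_i-\bar{X}$ and $Y_i-\bar{Y}$ are unchanged by constant shifts. It is not really the content of Proposition~\ref{prop:onesided}. The moments $\mu_{rs}$ are defined through centered variables, so they are unchanged by the reduction as well.
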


\begin{proof}
	For~\eqref{eq:var_km}, the summands of~\eqref{eq:km} are independent and identically distributed. This implies
	\begin{equation}
		\Var(\hat{\sigma}_{\mu_{Y}}(b)) = \frac{1}{n}\bigl(\E\cnorm{(X-b)\tilde{Y}}^{2} -\norm{\sigma_{XY}}^{2}\bigr)
		= \frac{1}{n}\bigl(\E(X-b)^{2}\cnorm{\tilde{Y}}^{2} -\norm{\sigma_{XY}}^{2}\bigr).
	\end{equation}
	Expanding $(X-b)^{2}=\tilde{X}^{2}+2m\tilde{X}+m^{2}$ and using~\eqref{eq:moments}, we obtain $\E[(X-b)^{2}\cnorm{\tilde{Y}}^{2}]=\mu_{22}+2m\,\mu_{12}+m^{2}\mu_{02}$. For~\eqref{eq:var_sm}, notice that $\Var(\cdot)$ is a sum over coordinates, and the $k$th entry of $\hat{\sigma}_{1}$ is the ordinary scalar sample covariance of $(X_{i},Y_{ik})$, whose exact variance is $\bigl(\mu_{22}^{(k)}-\sigma_{XY_{k}}^{2}\bigr)/n +\bigl(\sigma_{XY_{k}}^{2}+\mu_{20}\sigma_{Y_{k}}^{2}\bigr)/(n(n-1))$ with $\mu_{22}^{(k)}\deq\E[\tilde{X}^{2}\tilde{Y}_{k}^{2}]$; summing over $k$ gives~\eqref{eq:var_sm}.
\end{proof}

The baseline that minimizes $\Var\!\left(\hat{\sigma}_{\mu_{Y}}(b)\right)$ can be obtained by minimizing $q$ in~\eqref{eq:var_km}. The minimum of $q$ is attained at 
\begin{equation}
	m^{\star}\deq-\frac{\mu_{12}}{\mu_{02}}	
\end{equation}
with $q(m^{\star})=-\mu_{12}^{2}/\mu_{02}\leq0$. Thus, the minimum-variance baseline is given by
\begin{equation}
	b^{\star}=\mu_{X}-m^{\star}=\mu_{X}+\frac{\mu_{12}}{\mu_{02}}
	=\frac{\E\left[X\norm{Y-\mu_{Y}}^{2}\right]}{\E\norm{Y-\mu_{Y}}^{2}} .
	\label{eq:bstar}
\end{equation}

Proposition~\ref{prop:var} shows that the choice of $b$ determines which of the two unbiased estimators has the smaller variance. When the baseline is set to the population mean of $X$, i.e., $b=\mu_{X}$ or equivalently $m=0$, the variance of the known-mean estimator is smaller by $\bigl(\norm{\mu_{11}}^{2}+\mu_{20}\mu_{02}\bigr)/\bigl(n(n-1)\bigr)$, which is of order $O(n^{-2})$. When the minimum-variance baseline $b^{\star}$ is used, the variance is further reduced by $\mu_{12}^{2}/(n\mu_{02})$, which is of order $O(n^{-1})$. More generally, $\hat{\sigma}_{\mu_{Y}}(b)$ has strictly smaller variance than $\hat{\sigma}_{1}$ if and only if
\begin{equation}
	\abs{m-m^{\star}}<\rho_{n}
	\deq\sqrt{\frac{\norm{\mu_{11}}^{2}+\mu_{20}\mu_{02}}{(n-1)\mu_{02}}
		+\frac{\mu_{12}^{2}}{\mu_{02}^{2}}}.
	\label{eq:band}
\end{equation}
Conversely, when $m$ is far from $m^{\star}$, the variance of $\hat{\sigma}_{\mu_{Y}}(b)$ can be much larger than that of $\hat{\sigma}_{1}$, and the gap grows without bound as $\abs{m}$ grows.

\subsection{The sequence-level estimand\label{subsec:estimand}}

Denote by
\begin{equation}
	\score(y\mid x)\deq\nabla_{\theta}\log\pi_{\theta}(y\mid x)
	\label{eq:score_def}
\end{equation}
the score. Recall that $\E_{y\sim\pi_{\theta}}[\score(y\mid x)] =\sum_{y}\nabla_{\theta}\pi_{\theta}(y\mid x)=\nabla_{\theta}1=0$. This implies that the policy gradient $\nabla J(\theta)$ in~\eqref{eq:pg_seq} can be viewed as a covariance:
\begin{equation}
	\nabla J(\theta)
	=\E_{y\sim\pi_{\theta}}\left[R(x,y)\,\score(y\mid x)\right]
	=\Cov_{\pi_{\theta}}\!\left(R(x,y),\,\score(y\mid x)\right).
	\label{eq:pgcov}
\end{equation}
Denote the (sequence-level) importance sampling ratio by
\begin{equation}
	w(y\mid x)\deq\frac{\pi_{\theta}(y\mid x)}{\piapp(y\mid x)}.
	\label{eq:w_def}
\end{equation}
As discussed in Section~\ref{subsec:comparison}, $w$ is a product of $\abs{y}$ token-level ratios, and the resulting sample average can be dominated by a few rollouts. The standard method is to regularize the importance sampling ratio by replacing $w$ with $f(w)$ for some $f\colon\reals_{+}\to\reals_{+}$ satisfying $f(1) = 1$. Common choices are
\begin{equation}
	\begin{aligned}
		f(w)&=w                                 && \text{(standard importance sampling)},\\
		f(w)&=\min(w,C)                         && \text{(truncated importance sampling)},\\
		f(w)&=w\,\mathbf{1}\!\left[w\in[l,u]\right] && \text{(masked importance sampling)},\\
		f(w)&=\mathrm{clip}(w,1-\epsilon,1+\epsilon) && \text{(ratio clipping)}.
	\end{aligned}
	\label{eq:f_choices}
\end{equation}
The property that $f(1) = 1$ ensures that no regularization is imposed when the two policies agree. 

It is convenient to absorb the weight into the score by defining the \emph{weighted score}
\begin{equation}
	\wscore(y\mid x)\deq f\bigl(w(y\mid x)\bigr)\,\score(y\mid x).
	\label{eq:wscore_def}
\end{equation}
When importance sampling is used with regularization, the quantity in~\eqref{eq:pg_seq_IS} can be written as $\E_{\piapp}[R\,\wscore]$. We take the covariance
\begin{equation}
	\sigma^{\mathrm{seq}}\deq\Cov_{\piapp}\!\left(R(x,y),\,\wscore(y\mid x)\right)
	=\E_{\piapp}\left[R\,\wscore\right]-\E_{\piapp}[R]\cdot\E_{\piapp}\left[\wscore\right],
	\label{eq:estimand}
\end{equation}
as the sequence-level estimand instead of $\E_{\piapp}[R\,\wscore]$ for the following reasons. First, \eqref{eq:estimand} reduces to $\nabla J(\theta)$ if either $f$ is the identity mapping $\mathrm{id}$ (i.e., when standard importance sampling is used) or $\piapp=\pi_{\theta}$: In the former case,
\begin{equation}
	\E_{\piapp}\left[\wscore\right]
	=\sum_{y}\piapp(y\mid x)\frac{\pi_{\theta}(y\mid x)}{\piapp(y\mid x)}\score(y\mid x)
	=\E_{\pi_{\theta}}\left[\score\right]=0.
	\label{eq:wscore_zero}
\end{equation}
In the latter case, $w\equiv1$, and thus $\wscore=f(1)\score=\score$. Second, $\sigma^{\mathrm{seq}}$ is unchanged if the reward is shifted by a constant, $R\mapsto R+c$, whereas $\E_{\piapp}[R\,\wscore]$ is not.

\subsection{Sequence-level covariance estimation\label{subsec:seq}}

Since the sequence-level estimand $\sigma^{\mathrm{seq}}$ in~\eqref{eq:estimand} is a covariance, the results of Section~\ref{subsec:estimators} apply directly. Fix a prompt $x$, draw $y_{1},\dots,y_{G}\sim\piapp(\cdot\mid x)$ independently, and set $X_{i}=R(x,y_{i})$ and $Y_{i}=\wscore(y_{i}\mid x)$ with $n=G$. The following proposition shows that the sample-mean estimator of $\sigma^{\mathrm{seq}}$ yields the \ac{rloo} algorithm and, up to a multiplicative factor, the Dr.\ GRPO algorithm.

\begin{prop}[\ac{rloo} and Dr.\ GRPO are the sample-mean estimator]\label{prop:rloo}
	Let $b_{i}$ be the leave-one-out baseline of~\eqref{eq:loo_baseline}. Then
	\begin{equation}
		\frac{1}{G}\sum_{i=1}^{G}\left(X_{i}-b_{i}\right)Y_{i}
		=\hat{\sigma}_{1},
		\qquad
		\frac{1}{G}\sum_{i=1}^{G}\left(X_{i}-\bar{R}\right)Y_{i}
		=\frac{G-1}{G}\,\hat{\sigma}_{1},
		\label{eq:rloo_is_cov}
	\end{equation}
	where $\bar{R}$ is the group mean~\eqref{eq:group_mean}.
\end{prop}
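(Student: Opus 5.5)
The plan is to reduce both identities in~\eqref{eq:rloo_is_cov} to the algebraic identity~\eqref{eq:loo_identity} and to Proposition~\ref{prop:onesided}. With $X_{i}=R(x,y_{i})$, the group mean $\bar{R}$ of~\eqref{eq:group_mean} is exactly $\bar{X}$ with $n=G$. The leave-one-out baseline~\eqref{eq:loo_baseline} is $b_{i}=(G\bar{X}-X_{i})/(G-1)$.

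The first step is to verify~\eqref{eq:loo_identity} directly:
\[
X_{i}-b_{i}=\frac{(G-1)X_{i}-G\bar{X}+X_{i}}{G-1}=\frac{G}{G-1}\left(X_{i}-\bar{X}\right).
\]
Multiplying by $Y_{i}/G$ and summing over $i$ gives
\[
\frac{1}{G}\sum_{i=1}^{G}\left(X_{i}-b_{i}\right)Y_{i}=\frac{1}{G-1}\sum_{i=1}^{G}\left(X_{i}-\bar{X}\right)Y_{i}=\hat{\sigma}_{2}.
\]
Proposition~\ref{prop:onesided} then gives $\hat{\sigma}_{2}=\hat{\sigma}_{1}$ pathwise. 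This is the first identity.

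For the second identity, write $\frac{1}{G}\sum_{i}(X_{i}-\bar{R})Y_{i}$ as $\frac{G-1}{G}\cdot\frac{1}{G-1}\sum_{i}(X_{i}-\bar{X})Y_{i}=\frac{G-1}{G}\hat{\sigma}_{2}$. Invoking Proposition~\ref{prop:onesided} again gives $\frac{G-1}{G}\hat{\sigma}_{1}$.

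No step is a genuine obstacle; the argument is purely algebraic and holds for every realization. The only points needing care are that $G\geq2$, so that $b_{i}$ and $\hat{\sigma}_{1}$ are defined, and that the $Y_{i}=\wscore(y_{i}\mid x)$ enter only linearly. Because the $Y_{i}$ enter linearly, regularization through $f$ plays no role. Independence of the samples is not needed for the identities themselves; it matters only for the unbiasedness inherited from Proposition~\ref{prop:onesided}.
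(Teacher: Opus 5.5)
Your proposal is correct and takes the same approach as the paper. Both use~\eqref{eq:loo_identity} to rewrite the leave-one-out sum as $\hat{\sigma}_{2}$, invoke Proposition~\ref{prop:onesided} to identify it with $\hat{\sigma}_{1}$, and read off the Dr.\ GRPO identity from the same expression, while your remarks that only $G\geq2$ is needed and that independence matters only for unbiasedness are accurate.
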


\begin{proof}
	By~\eqref{eq:loo_identity}, $X_{i}-b_{i}=\frac{G}{G-1}(X_{i}-\bar{R})$ for each $i$. Thus the first sum in~\eqref{eq:rloo_is_cov} becomes $\frac{1}{G-1}\sum_{i}(X_{i}-\bar{R})Y_{i}=\hat{\sigma}_{2}$, which is $\hat{\sigma}_{1}$ by Proposition~\ref{prop:onesided}. The second identity is immediate from the same expression.
\end{proof}

With standard importance sampling, i.e., $f = \mathrm{id}$, we have $\mu_{Y} \deq \E_{\piapp}[\wscore] = 0$; hence score centering is implicit. When $f\neq\mathrm{id}$, we have $\mu_{Y}\neq0$ in general, and one may ask whether score centering provides any benefit. Exact computation of $\mu_{Y}$ is intractable since it involves a sum over all possible responses. More importantly, the following proposition shows that score centering with $\mu_{Y}$ is unnecessary when the sample-mean reward is used as the baseline, even when $f\neq\mathrm{id}$.

\begin{prop}[Sequence-level centering with the group-mean baseline]\label{prop:noop}
	With the group-mean baseline,
	\begin{equation}
		\frac{1}{G-1}\sum_{i=1}^{G}\left(X_{i}-\bar{R}\right)\left(Y_{i}-\mu_{Y}\right)
		=\frac{1}{G-1}\sum_{i=1}^{G}\left(X_{i}-\bar{R}\right)Y_{i} = \hat{\sigma}_1.
		\label{eq:noop}
	\end{equation}
\end{prop}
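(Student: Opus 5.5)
This is a direct specialization of Proposition~\ref{prop:onesided}, so the plan is to invoke it rather than compute anything new. In the sequence-level setup we have $X_{i}=R(x,y_{i})$ and $n=G$, so the group mean $\bar{R}$ in~\eqref{eq:group_mean} is exactly the sample mean $\bar{X}$ of Section~\ref{subsec:estimators}. The population mean $\mu_{Y}=\E_{\piapp}[\wscore]$ is a fixed vector in $\reals^{d}$ once $x$ and $\theta$ are fixed. It does not depend on the sample, so it qualifies as a constant $\beta$ in~\eqref{eq:onesided}.

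The argument runs in three steps.

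\emph{First}, take $\beta=\mu_{Y}$ in the first equality of~\eqref{eq:onesided}. The left side of~\eqref{eq:noop} then equals $\hat{\sigma}_{1}$.

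\emph{Second}, take $\beta=0$. This gives $\frac{1}{G-1}\sum_{i}(X_{i}-\bar{R})Y_{i}=\hat{\sigma}_{2}=\hat{\sigma}_{1}$, which yields the remaining equalities.

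\emph{Third}, for a self-contained version, expand directly:
\[
\sum_{i}(X_{i}-\bar{R})(Y_{i}-\mu_{Y})=\sum_{i}(X_{i}-\bar{R})Y_{i}-\Bigl(\sum_{i}(X_{i}-\bar{R})\Bigr)\mu_{Y}.
\]
The last term vanishes because the deviations of the $X_{i}$ from their sample mean sum to zero. This holds for every realization, so no expectation or independence argument is needed.

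There is no real obstacle. The only point needing care is that $\mu_{Y}$ is deterministic rather than sample-dependent, so the conditions of Proposition~\ref{prop:onesided} are met. This is also why the identity holds even when $f\neq\mathrm{id}$ and $\mu_{Y}\neq0$: the regularization only changes the value of $\mu_{Y}$, not the fact that it is a constant.
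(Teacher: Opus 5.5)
Your proposal is correct and matches the paper's proof, which likewise obtains the identity by setting $\beta=\mu_{Y}$ (and $\beta=0$ for $\hat{\sigma}_{2}=\hat{\sigma}_{1}$) in Proposition~\ref{prop:onesided}. Your direct expansion using $\sum_{i}(X_{i}-\bar{R})=0$ is just that proposition's proof specialized to this case.
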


\begin{proof}
	The result immediately follows from Proposition~\ref{prop:onesided} by setting $\beta = \mu_{Y}$.
\end{proof}

Since both \ac{rloo} and Dr.\ GRPO can be viewed as the sample-mean estimator $\hat{\sigma}_1$ up to a multiplicative factor, Proposition~\ref{prop:noop} implies that score centering is unnecessary for both algorithms.

\paragraph{Comparison with the variance-optimal baseline}
Suppose that standard importance sampling is used. Then $\mu_{Y} = 0$. From~\eqref{eq:bstar}, the optimal baseline is given by $b^{\star}=\E[X\norm{Y}^{2}]/\E\norm{Y}^{2}$. This is the variance-optimal reward baseline of \citet[Theorem~11]{greensmith_variance_2004}. The mean reward, which is the baseline used by the group-relative methods, is optimal only when $\mu_{12}=0$, as also shown by \citet[Corollary~12]{greensmith_variance_2004}. In language model post-training, $\mu_{12}=\Cov(R,\norm{\score}^{2})$ is not negligible, since $\norm{\score}^{2}$ grows with response length, and length is empirically correlated with reward. This indicates that using the mean reward as the baseline may be far from optimal.

\subsection{The token-level estimand\label{subsec:tok_estimand}}

At the token level, the approximate policy gradient $\nabla\Japp(\theta)$ is computed, and the importance sampling ratio is applied per token position as in~\eqref{eq:trpo_grad_token} and~\eqref{eq:trpo_grad_token_reward}. Write $h_{t}\deq(x,y_{<t})$ for the prefix,
\begin{equation}
	w(y_t \mid h_t)\deq\frac{\pi_{\theta}(y_{t}\mid h_{t})}{\piapp(y_{t}\mid h_{t})},
	\qquad
	\wscore_{t}\deq f(w(y_t \mid h_t))\,\nabla_{\theta}\log\pi_{\theta}(y_{t}\mid h_{t}),
	\label{eq:tok_defs}
\end{equation}
and let
\begin{equation}
	\bar{s}(h_t)
	\deq\E_{\piapp}\left[\wscore_{t} \mid y_{<t}\right]
	=\sum_{v\in\cV}\piapp(v\mid h_t)\,f(w(v\mid h_t))\,\nabla_{\theta}\log\pi_{\theta}(v\mid h_t)
	\label{eq:sbar_h}
\end{equation}
be the mean weighted score at a given prefix $h_t$, where $\cV$ denotes the token vocabulary, i.e., the action space $\cA$ in Section~\ref{subsec:MDP}. Because the expectation in~\eqref{eq:sbar_h} is a finite sum over $\cV$, computing $\bar{s}(h)$ is much more tractable than computing the mean score $\mu_{Y}$ at the sequence level in Section~\ref{subsec:seq}. As in the sequence-level case, $\bar{s}(h) = 0$ for any $h$ when $f = \mathrm{id}$. We assume that the sampling policy terminates in finite expected time and that the weighted score has a bounded conditional mean:
\begin{equation}
	\E_{\piapp}\left[\abs{y}\right]<\infty,
	\qquad
	\sup_{h}\ \E_{v\sim\piapp(\cdot\mid h)}\norm{f(w(v\mid h))\,\nabla_{\theta}\log\pi_{\theta}(v\mid h)}<\infty .
	\label{eq:regularity}
\end{equation}
To incorporate regularized importance sampling, we generalize $\nabla\Japp(\theta)$ in~\eqref{eq:trpo_grad_token} and define the token-level estimand
\begin{equation}
	\sigma^{\mathrm{tok}}
	\deq\E_{\piapp}\left[\sum_{t=0}^{|y|-1}f(w(y_{t}\mid h_{t}))\,A^{\piapp}(h_{t},y_{t})\,\nabla_{\theta}\log\pi_{\theta}(y_{t}\mid h_{t})\right].
	\label{eq:tok_estimand_adv}
\end{equation}
Write $Q_{t}\deq\E_{\piapp}[R\mid y_{\leq t}]$ for the state-action value and $V(h_{t})\deq\E_{\piapp}[R\mid y_{<t}]$ for the state value under $\piapp$. The following lemma shows that $\sigma^{\mathrm{tok}}$ can be expressed as a sum of conditional covariances.
\begin{lem} The token-level estimand
\begin{equation}
	\sigma^{\mathrm{tok}}
	=\E_{\piapp}\left[\sum_{t=0}^{\abs{y}-1}\Cov_{\piapp}\left(\wscore_{t},Q_{t}\;\big|\;y_{<t}\right)\right].
	\label{eq:tok_estimand}
\end{equation}
\end{lem}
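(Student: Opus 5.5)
The plan is to rewrite each term of~\eqref{eq:tok_estimand_adv} by conditioning on the prefix $y_{<t}$, and then to recognize the resulting conditional expectation as a conditional covariance. In the \ac{mdp} of Section~\ref{subsec:MDP} the reward is sparse, and $A^{\piapp}(h_t,y_t)=Q^{\piapp}(h_t,y_t)-V^{\piapp}(h_t)$. Because transitions are deterministic, these coincide with $Q_t=\E_{\piapp}[R\mid y_{\le t}]$ and $V(h_t)=\E_{\piapp}[R\mid y_{<t}]$. So the integrand at step $t$ is $\wscore_t\,(Q_t-V(h_t))$, with the convention that it vanishes for $t\ge\abs{y}$.

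First I will exchange the expectation and the random-length sum. I write the sum as $\sum_{t=0}^{\infty}\mathbf{1}[\abs{y}>t]\,\wscore_t(Q_t-V(h_t))$. The event $\{\abs{y}>t\}$ is determined by $y_{<t}$, since it is the event that no earlier token was $\eos$.

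Justifying the interchange (Fubini) is the main technical step. Assuming $R$ is bounded, so that $\abs{Q_t-V}\le 2\sup\abs{R}$, I can bound
\[
\E\!\left[\mathbf{1}[\abs{y}>t]\,\norm{\wscore_t}\right]\le \PP(\abs{y}>t)\,\sup_h\E_{v\sim\piapp(\cdot\mid h)}\norm{f(w(v\mid h))\nabla_\theta\log\pi_\theta(v\mid h)}.
\]
Summing over $t$ gives at most $\E[\abs{y}]$ times a finite constant, by~\eqref{eq:regularity}. This justifies both swapping $\E$ with $\sum_t$ and applying the tower property termwise. If $R$ is not assumed bounded, I would instead note that the paper implicitly needs integrability of $R$ and invoke the analogous bound on conditional moments; I expect this to be where care is needed.

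Next, for each $t$ I condition on $y_{<t}$, on the event $\abs{y}>t$. Since $V(h_t)$ is $y_{<t}$-measurable,
\[
\E[\wscore_t(Q_t-V(h_t))\mid y_{<t}]=\E[\wscore_t Q_t\mid y_{<t}]-\E[\wscore_t\mid y_{<t}]\,V(h_t).
\]
By the tower property, $V(h_t)=\E[Q_t\mid y_{<t}]$, because $Q_t$ is a conditional expectation given the finer history $y_{\le t}$. Therefore the right-hand side equals $\E[\wscore_t Q_t\mid y_{<t}]-\E[\wscore_t\mid y_{<t}]\,\E[Q_t\mid y_{<t}]=\Cov_{\piapp}(\wscore_t,Q_t\mid y_{<t})$.

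Finally, I reinsert the indicator, which is $y_{<t}$-measurable, take the outer expectation, and resum, again using the Fubini justification. This yields~\eqref{eq:tok_estimand}. Note that $\wscore_t$ and $Q_t$ are both functions of $y_{\le t}$, so the conditional covariance is over the single random token $y_t\sim\piapp(\cdot\mid h_t)$, a finite sum over $\cV$.
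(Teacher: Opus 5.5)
Your proposal is correct and follows essentially the same route as the paper. Both arguments identify $A^{\piapp}(h_t,y_t)=Q_t-V(h_t)$ with $V(h_t)=\E[Q_t\mid y_{<t}]$, recognize $\E[\wscore_t(Q_t-V(h_t))\mid y_{<t}]$ as the conditional covariance, and move between the random-length sum and $\sum_{t\ge0}\mathbf{1}[t<\abs{y}]$ termwise, using that $\{\abs{y}>t\}$ is determined by $y_{<t}$. Your explicit justification of the interchange, via $\sum_t\PP(\abs{y}>t)=\E\abs{y}$ and a bounded reward, is more careful than the paper's, which cites only~\eqref{eq:regularity} and tacitly needs some integrability assumption on $R$.
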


\begin{proof}
For brevity, we assume that all the expectations and covariances are taken under $\piapp$ and drop $\piapp$ from all the subscripts. Using the facts $\E[Q_{t}\mid y_{<t}]=V(h_{t})$ and $A^{\piapp}(h_{t}, y_{t}) = Q_t - V(h_{t})$, it can be seen that
\begin{equation}
\begin{aligned}
	&\Cov\bigl(\wscore_{t},Q_{t}\;\big|\;y_{<t}\bigr) = \E\left[\wscore_{t} (Q_{t} - \E[Q_{t} \mid y_{<t}])\;\big|\;y_{<t}\right]\\ 
	&\qquad = \E\left[\wscore_{t} (Q_{t} - V(h_{t}))\;\big|\;y_{<t}\right]
	= \E\left[\wscore_{t} A^{\piapp}(h_{t}, y_{t})\;\big|\;y_{<t}\right].
\end{aligned}
\label{eq:cov_token_Q_centering}
\end{equation}
Thus, 
\begin{equation}
\begin{aligned}
	& \E\left[\sum_{t=0}^{\abs{y}-1}\Cov\left(\wscore_{t},Q_{t}\;\big|\;y_{<t}\right)\right]=\E\left[\sum_{t=0}^{\infty}\mathbf{1}\bigl[t<\abs{y}\bigr]\E\left[\wscore_{t}A^{\piapp}(h_{t},y_{t})\;\big|\;y_{<t}\right]\right]\\
	& \qquad=\sum_{t=0}^{\infty}\E\left[\mathbf{1}\bigl[t<\abs{y}\bigr]\E\left[\wscore_{t}A^{\piapp}(h_{t},y_{t})\;\big|\;y_{<t}\right]\right]=\sum_{t=0}^{\infty}\E\left[\E\left[\mathbf{1}\bigl[t<\abs{y}\bigr]\wscore_{t}A^{\piapp}(h_{t},y_{t})\;\big|\;y_{<t}\right]\right]\\
	& \qquad=\sum_{t=0}^{\infty}\E\left[\mathbf{1}\bigl[t<\abs{y}\bigr]\wscore_{t}A^{\piapp}(h_{t},y_{t})\right]=\E\left[\sum_{t=0}^{\abs{y}-1}\wscore_{t}A^{\piapp}(h_{t},y_{t})\right]=\sigma^{\mathrm{tok}}.
\end{aligned}
\end{equation}
The interchange of the summation and the expectation is permitted under the conditions in~\eqref{eq:regularity}. The third equality holds because the event $\{\abs{y}>t\}$ is a function of $y_{<t}$, as it is determined by whether $\eos$ occurs among $y_{0},\dots,y_{t-1}$.
\end{proof}

The form in~\eqref{eq:tok_estimand} does not depend on the score being centered since a covariance only requires one of its arguments to be centered, as achieved by $V(h_{t}) = \E_{\piapp}[Q_{t}\mid y_{<t}]$ in~\eqref{eq:cov_token_Q_centering}. Alternatively, one may center the score $\wscore_{t}$ by expressing
\begin{equation}
\Cov_{\piapp}\bigl(\wscore_{t},Q_{t}\;\big|\;y_{<t}\bigr) = \E_{\piapp}\bigl[\bigl(\wscore_{t}-\E_{\piapp}\left[\wscore_{t} \mid y_{<t}\right]\bigr)\,Q_{t}\bigm|y_{<t}\bigr] = \E_{\piapp}\bigl[\bigl(\wscore_{t}-\bar{s}(h_{t})\bigr)\,Q_{t}\bigm|y_{<t}\bigr].
\label{eq:cov_token_score_centering} 
\end{equation}
As will be shown in Proposition~\ref{prop:token_estimand}, score centering leads to an equivalent and more useful form. Define
\begin{equation}
	D\deq\sum_{t=0}^{\abs{y}-1}\left(\wscore_{t}-\bar{s}(h_{t})\right).
	\label{eq:Ddef}
\end{equation}	

\begin{lem}\label{lem:mds} $\E_{\piapp}[D]=0$.
\end{lem}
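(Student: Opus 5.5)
The plan is to show that $D$ is a sum of martingale differences and to use the tower property term by term, following the same interchange argument as in the proof of the preceding lemma. Write
\[
D=\sum_{t=0}^{\infty}\mathbf{1}\bigl[t<\abs{y}\bigr]\bigl(\wscore_{t}-\bar{s}(h_{t})\bigr),
\]
with the convention that terms after termination vanish. The key observation is that the event $\{\abs{y}>t\}$ and the centering term $\bar{s}(h_{t})$ are both functions of $y_{<t}$. Also, by~\eqref{eq:sbar_h}, $\E_{\piapp}[\wscore_{t}\mid y_{<t}]=\bar{s}(h_{t})$ on $\{\abs{y}>t\}$, because conditionally on the prefix the next token is drawn from $\piapp(\cdot\mid h_{t})$.

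The steps are as follows. First, for each fixed $t$, condition on $y_{<t}$:
\[
\E\bigl[\mathbf{1}[t<\abs{y}](\wscore_{t}-\bar{s}(h_{t}))\bigr]=\E\bigl[\mathbf{1}[t<\abs{y}]\,\E[\wscore_{t}-\bar{s}(h_{t})\mid y_{<t}]\bigr]=0.
\]
Second, exchange the infinite sum and the expectation, which gives $\E[D]=\sum_{t}0=0$.

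The only real obstacle is justifying this interchange, which I will do with Fubini/dominated convergence using~\eqref{eq:regularity}. Let $M\deq\sup_{h}\E_{v\sim\piapp(\cdot\mid h)}\norm{f(w(v\mid h))\nabla_{\theta}\log\pi_{\theta}(v\mid h)}<\infty$. Then $\norm{\bar{s}(h)}\le M$ by Jensen, and $\E[\norm{\wscore_{t}}\mid y_{<t}]\le M$. Hence
\[
\sum_{t}\E\bigl[\mathbf{1}[t<\abs{y}]\,\norm{\wscore_{t}-\bar{s}(h_{t})}\bigr]\le 2M\sum_{t}\PP(\abs{y}>t)=2M\,\E\abs{y}<\infty,
\]
where the bound on each term again comes from conditioning on $y_{<t}$. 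This absolute summability shows that $D$ is integrable and permits the sum–expectation swap. That completes the argument.
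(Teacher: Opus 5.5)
Your proof is correct and takes essentially the paper's approach: both treat $\mathbf{1}[t<\abs{y}](\wscore_{t}-\bar{s}(h_{t}))$ as a martingale difference sequence with respect to the prefixes $y_{<t}$ and obtain integrability from~\eqref{eq:regularity}. The only difference is that the paper invokes optional stopping at the stopping time $\abs{y}$, whereas you carry out the Fubini interchange explicitly via the bound $2M\,\E\abs{y}$, which is the standard proof of that version of optional stopping.
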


\begin{proof}
	Let $M_{t}\deq\mathbf{1}\bigl[t<\abs{y}\bigr]\bigl(\wscore_{t}-\bar{s}(h_{t})\bigr)$. Both the prefix $h_{t}=(x,y_{<t})$ and the event $\{\abs{y}>t\}$ are functions of $y_{<t}$. By~\eqref{eq:sbar_h},
	\[
	\E\left[M_{t}\mid y_{<t}\right]
	=\mathbf{1}\bigl[t<\abs{y}\bigr]\E\left[\wscore_{t} - \bar{s}(h_{t}) \mid y_{<t}\right]
	=\mathbf{1}\bigl[t<\abs{y}\bigr]
	\left(\E\left[\wscore_{t} \mid y_{<t}\right]-\bar{s}(h_{t})\right)=0.
	\]
	This shows that $\{M_{t}\}$ is a martingale difference sequence. Therefore, the partial sums of $\{M_{t}\}$ form a martingale, and $\abs{y}$ is a stopping time. The conditions in~\eqref{eq:regularity} bound $\E[\norm{M_{t}}\mid y_{<t}]$ uniformly, so optional stopping applies and $\E[D]=0$.
\end{proof}

The following proposition shows that $\sigma^{\mathrm{tok}}$ can be expressed as a single covariance.

\begin{prop}\label{prop:token_estimand}
	 The token-level estimand $\sigma^{\mathrm{tok}} = \Cov_{\piapp}(R, D)$.
\end{prop}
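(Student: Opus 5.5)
Since $\E_{\piapp}[D]=0$ by Lemma~\ref{lem:mds}, we have $\Cov_{\piapp}(R,D)=\E_{\piapp}[R\,D]$. The plan is to show that $\E[R\,D]$ equals the conditional-covariance expression in~\eqref{eq:tok_estimand}, which by the preceding lemma equals $\sigma^{\mathrm{tok}}$. We write $D=\sum_{t\ge0}M_{t}$ with $M_{t}=\mathbf{1}[t<\abs{y}](\wscore_{t}-\bar{s}(h_{t}))$ as in the proof of Lemma~\ref{lem:mds}. We then interchange sum and expectation, $\E[R\,D]=\sum_{t}\E[R\,M_{t}]$, and handle each term by the tower property.

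For a fixed $t$, $M_{t}$ is measurable with respect to $y_{\le t}$. Conditioning on $y_{\le t}$ therefore gives $\E[R\,M_{t}]=\E[M_{t}\,\E[R\mid y_{\le t}]]=\E[M_{t}\,Q_{t}]$. Conditioning further on $y_{<t}$, and using that $\mathbf{1}[t<\abs{y}]$ and $\bar{s}(h_{t})$ are functions of $y_{<t}$, we get
\[
\E[M_{t}Q_{t}]=\E\Bigl[\mathbf{1}[t<\abs{y}]\,\E\bigl[(\wscore_{t}-\bar{s}(h_{t}))\,Q_{t}\bigm| y_{<t}\bigr]\Bigr]
=\E\Bigl[\mathbf{1}[t<\abs{y}]\,\Cov\bigl(\wscore_{t},Q_{t}\bigm| y_{<t}\bigr)\Bigr],
\]
where the last step is exactly~\eqref{eq:cov_token_score_centering}. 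Summing over $t$ and moving the sum back inside the expectation reproduces the right side of~\eqref{eq:tok_estimand}. The preceding lemma then yields $\sigma^{\mathrm{tok}}$.

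The main obstacle is justifying the two interchanges of $\sum_{t}$ and $\E$, since $R\,D$ involves a random-length sum. The first thing I would try is a dominated-convergence bound. Since $R$ is bounded (or at least $\E\abs{R}$-integrable with bounded conditional means), I would bound
\[
\sum_{t}\E\norm{R\,M_{t}}\le \sup\abs{R}\sum_{t}\E\bigl[\mathbf{1}[t<\abs{y}]\,\E[\norm{\wscore_{t}-\bar{s}(h_{t})}\mid y_{<t}]\bigr]\le 2C\,\sup\abs{R}\,\E\abs{y}<\infty,
\]
where $C$ is the supremum in~\eqref{eq:regularity}. This is the same Fubini-type argument already invoked in the preceding lemma and in Lemma~\ref{lem:mds}.

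If $R$ is not assumed bounded, I would instead assume the integrability needed for $\E[R\,D]$ to be well defined, in the same spirit as~\eqref{eq:regularity}, and argue via truncation of $R$. The remaining steps are the routine tower-property identities above.
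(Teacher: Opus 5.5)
Your proposal is correct and takes essentially the same route as the paper. The paper runs the same chain of tower-property equalities in the opposite direction: it starts from the sum of conditional covariances in the score-centered form \eqref{eq:cov_token_score_centering}, substitutes $Q_{t}=\E[R\mid y_{\leq t}]$, and ends at $\E[R\,D]$. Your explicit domination bound $2C\sup\abs{R}\,\E\abs{y}$ and your appeal to Lemma~\ref{lem:mds} to identify $\E[R\,D]$ with $\Cov_{\piapp}(R,D)$ fill in two points the paper leaves implicit, including the fact that \eqref{eq:regularity} alone places no control on $R$.
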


\begin{proof}
Expanding the conditional covariance in~\eqref{eq:tok_estimand} with score centering in~\eqref{eq:cov_token_score_centering} yields
\begin{equation}
	\begin{aligned}
		& \E\left[\sum_{t=0}^{\abs{y}-1}\E\bigl[\bigl(\wscore_{t}-\bar{s}(h_{t})\bigr)\,Q_{t}\bigm|y_{<t}\bigr]\right]=\E\left[\sum_{t=0}^{\infty}\mathbf{1}\bigl[t<\abs{y}\bigr]\,\E\bigl[\bigl(\wscore_{t}-\bar{s}(h_{t})\bigr)\,Q_{t}\bigm|y_{<t}\bigr]\right]\\
		& \qquad=\sum_{t=0}^{\infty}\E\left[\mathbf{1}\bigl[t<\abs{y}\bigr]\,\E\bigl[\bigl(\wscore_{t}-\bar{s}(h_{t})\bigr)\,Q_{t}\bigm|y_{<t}\bigr]\right]=\sum_{t=0}^{\infty}\E\left[\mathbf{1}\bigl[t<\abs{y}\bigr]\,\E\bigl[\bigl(\wscore_{t}-\bar{s}(h_{t})\bigr)\,\E[R\mid y_{\leq t}]\bigm|y_{<t}\bigr]\right]\\
		& \qquad=\sum_{t=0}^{\infty}\E\left[\E\bigl[\E\bigl[\mathbf{1}\bigl[t<\abs{y}\bigr]\,\bigl(\wscore_{t}-\bar{s}(h_{t})\bigr)\,R\mid y_{\leq t}\bigr]\bigm|y_{<t}\bigr]\right]=\sum_{t=0}^{\infty}\E\left[\mathbf{1}\bigl[t<\abs{y}\bigr]\,\bigl(\wscore_{t}-\bar{s}(h_{t})\bigr)\,R\right]\\
		& \qquad=\E\left[\sum_{t=0}^{\infty}\mathbf{1}\bigl[t<\abs{y}\bigr]\bigl(\wscore_{t}-\bar{s}(h_{t})\bigr)R\right]=\E[R\,D].
	\end{aligned}
\end{equation}
The interchange of the summation and the expectation is permitted under the conditions in~\eqref{eq:regularity}. The fourth equality uses the fact that both $\wscore_{t}-\bar{s}(h_{t})$ and the event $\{\abs{y}>t\}$ are functions of $y_{\leq t}$.
\end{proof}

\subsection{Token-level covariance estimation\label{subsec:token}}

Since the token-level estimand in Proposition~\ref{prop:token_estimand} is a single covariance, the results of Section~\ref{subsec:estimators} immediately apply with $X=R$, $Y=D$, and $n=G$. In addition, Lemma~\ref{lem:mds} implies $\mu_{Y} = \E[D] = 0$. Suppose that $y_{1},\dots,y_{G}\sim\piapp(\cdot\mid x)$ are drawn independently. The corresponding pairs $(R_{i},D_{i})$, $i=1,\dots,G$, are independent and identically distributed. The estimator we mainly consider is the known-mean estimator in~\eqref{eq:km} with $\mu_{Y}=0$, namely
\[
\hat{\sigma}_{\mu_{Y}}(b)=\frac{1}{G}\sum_{i=1}^{G}(R_{i}-b)D_{i},
\]
 	which is unbiased for every constant $b$ and every $G\geq1$ by Proposition~\ref{prop:freeb}. 
 	
In comparison, the sample-mean estimator analogous to $\hat{\sigma}_2$ in Definition~\ref{def:sm} is given by
\[
\frac{1}{G-1}\sum_{i=1}^{G}(R_{i}-\bar{R})D_{i}.
\]
It also requires score centering because it involves $D_{i}$. Recall that, by Proposition~\ref{prop:noop}, score centering for the sample-mean estimator in~\eqref{eq:rloo_is_cov} is optional at the sequence level regardless of $f$, as long as the group-mean baseline $b=\bar{R}$ is used. In contrast, the following proposition shows that score centering is no longer optional at the token level even when $b=\bar{R}$; removing score centering would generally yield a different estimator. 

\begin{prop}[Token-level centering with the group-mean baseline]\label{prop:tokcenter}
	With the group-mean baseline, the uncentered estimator
	\begin{equation}
		\frac{1}{G-1}\sum_{i=1}^{G}\bigl(R_{i}-\bar{R}\bigr)\sum_{t=0}^{\abs{y_i}-1}\wscore_{i,t},
		\label{eq:tokuncentered}
	\end{equation}
	obtained by applying $f$ to~\eqref{eq:token_grpo}, is unbiased for
	\begin{equation}
		\sigma^{\mathrm{tok}}+\Cov_{\piapp}\Bigl(R,\ \sum_{t=0}^{\abs{y}-1}\bar{s}(h_{t})\Bigr).
		\label{eq:tokbias}
	\end{equation}
\end{prop}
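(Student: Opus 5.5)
The plan is to decompose the uncentered per-sample score sum into its centered part and a compensator. For each $i$ define
\[
D_{i}\deq\sum_{t=0}^{\abs{y_i}-1}\bigl(\wscore_{i,t}-\bar{s}(h_{i,t})\bigr),\qquad
S_{i}\deq\sum_{t=0}^{\abs{y_i}-1}\bar{s}(h_{i,t}),
\]
so that $\sum_{t}\wscore_{i,t}=D_{i}+S_{i}$. The pairs $(R_{i},D_{i}+S_{i})$, $i=1,\dots,G$, are independent and identically distributed, because they are deterministic functions of the independent rollouts $y_{i}\sim\piapp(\cdot\mid x)$.

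First, I would observe that~\eqref{eq:tokuncentered} is exactly the sample-mean estimator $\hat{\sigma}_{2}$ of Definition~\ref{def:sm} with $X_{i}=R_{i}$, $Y_{i}=D_{i}+S_{i}$ and $n=G$. Proposition~\ref{prop:onesided} then makes it unbiased for $\Cov_{\piapp}(R,D+S)$. This requires only that the relevant first and second moments be finite. I would take this integrability from~\eqref{eq:regularity} as the paper does: $\E\norm{S}\le\E\abs{y}\cdot\sup_{h}\norm{\bar{s}(h)}<\infty$, and similarly for $D$.

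Second, I would split the covariance by bilinearity, $\Cov(R,D+S)=\Cov(R,D)+\Cov(R,S)$. By Proposition~\ref{prop:token_estimand}, the first term equals $\sigma^{\mathrm{tok}}$, and the second is the claimed extra term in~\eqref{eq:tokbias}. Lemma~\ref{lem:mds} is not strictly needed here, but it shows that $\E[D+S]=\E[S]$. This explains why the extra term does not vanish in general: $S$ is a nonconstant function of the trajectory, through both its length and its prefixes, and it can be correlated with $R$. When $f=\mathrm{id}$ we have $\bar{s}\equiv0$, so $S=0$ and the bias disappears.

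The only delicate step is the integrability of the products, namely finiteness of $\E\abs{R}\norm{S}$, so that bilinearity of the covariance and Proposition~\ref{prop:onesided} are legitimate. With bounded rewards this again follows from $\E\abs{y}<\infty$ and the uniform bound in~\eqref{eq:regularity}. I would state this under the same standing regularity assumptions the paper already uses, rather than develop it further.
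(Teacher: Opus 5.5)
Your proposal is correct and matches the paper's proof. Both split $\sum_t \wscore_{i,t}=D_i+S_i$ and combine the unbiasedness of one-sided centering (Proposition~\ref{prop:onesided}) with $\Cov_{\piapp}(R,D)=\sigma^{\mathrm{tok}}$ (Proposition~\ref{prop:token_estimand}); the only difference is that the paper splits the estimator into two sample covariances before taking expectations, whereas you take the expectation first and then split $\Cov(R,D+S)$ by bilinearity, and your integrability remarks (via $\E\abs{y}<\infty$, the uniform bound in~\eqref{eq:regularity}, and bounded rewards) make explicit what the paper leaves implicit.
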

\begin{proof}
	Split the inner sum as $\sum_{t}\wscore_{i,t}=D_{i}+\sum_{t}\bar{s}(h_{i,t})$ and rewrite~\eqref{eq:tokuncentered} as
	\[
	\frac{1}{G-1}\sum_{i=1}^{G}\bigl(R_{i}-\bar{R}\bigr) D_{i} + \frac{1}{G-1}\sum_{i=1}^{G}\bigl(R_{i}-\bar{R}\bigr)\sum_{t}\bar{s}(h_{i,t}).
	\] 
	Each of the two terms is an ordinary sample covariance with one-sided centering on $R$. These are unbiased for $\Cov(R,D)=\sigma^{\mathrm{tok}}$ and for $\Cov(R,\sum_{t}\bar{s}(h_{t}))$, respectively.
\end{proof}

\begin{rem}
	The discrepancy vanishes when $f=\mathrm{id}$ since $\bar{s}(h) = 0$ for any $h$, and is otherwise nonzero in general. The sequence-level cancellation fails here because the token-level mean score $\sum_{t}\bar{s}(h_{i,t})$ varies across rollouts, whereas the sequence-level mean score $\mu_{Y}$ in~\eqref{eq:noop} is a single constant and can be annihilated by $\sum_{i}(R_{i}-\bar{R})$. 
\end{rem}
	
The optimal baseline can be obtained as
\begin{equation}
	b^{\star}=\frac{\E\bigl[R\norm{D}^{2}\bigr]}{\E\norm{D}^{2}}
	\label{eq:tokbstar}
\end{equation}	
following~\eqref{eq:bstar}. Recall $b^{\star} - \mu_{X} = -m^{\star} = \mu_{12}/\mu_{02}$. Since $\mu_{02}>0$, the sign of $\mu_{12}$ determines how $b^{\star}$ compares with the mean reward $\mu_{X}$. In the token-level setting, we have $\mu_{12} = \E[(R - \E[R]) \norm{D}^2]$. Since $\norm{D}^{2}$ grows with response length, and a longer response empirically leads to higher reward, we expect $\mu_{12} > 0$ to often hold in practice. This implies that it is preferable to set the baseline above the mean reward.

One should keep in mind that the expectations in~\eqref{eq:tokbstar} are taken over rollouts from a \emph{single} prompt. Thus, $b^{\star}$ is prompt-dependent, and no constant baseline remains optimal across multiple prompts in general. Estimating $b^{\star}$ per prompt instead would still require multiple rollouts for the current prompt.
\section{Summary}

We have examined the roles of baselines and score centering in policy gradient estimation when responses are generated off-policy and the importance sampling ratio is regularized. Without regularization, sequence-level and token-level importance sampling yield unbiased gradient estimators for different objectives: the former for the true objective $J$ and the latter for the surrogate $\Japp$. At either level, regularizing the ratio breaks the invariance of the importance-weighted gradient to constant shifts in the reward. Choosing the covariance between the reward and the weighted score as the estimand recovers this invariance, and the covariance reduces to the policy gradient when the ratio is not regularized. At the sequence level, the estimand is a single covariance by definition; at the token level, it is an expected sum of conditional covariances but can also be written as a single covariance (Proposition~\ref{prop:token_estimand}). Hence, the two classes of covariance estimators in Section~\ref{subsec:estimators} apply at both levels.

The covariance perspective clarifies the roles of baselines and score centering. Both perform centering in covariance estimation, but they differ in the choice of the mean used for centering. The group-mean baseline uses the sample mean and yields the sample-mean estimator, which coincides with \ac{rloo} and, up to a multiplicative factor, with Dr.\ GRPO (Proposition~\ref{prop:rloo}). With a sample mean, centering the reward, centering the score, and centering both are all equivalent (Proposition~\ref{prop:onesided}). Score centering uses the population mean instead and yields the known-mean estimator, which is unbiased for any constant baseline (Proposition~\ref{prop:freeb}) and is defined even for a single sample. The two estimators have different variances (Proposition~\ref{prop:var}). The known-mean estimator has strictly smaller variance if and only if its baseline is sufficiently close to the variance-optimal baseline $b^{\star}$, and it can become arbitrarily worse as the baseline moves away from $b^{\star}$.

Whether both forms of centering should be used simultaneously depends on whether importance sampling is applied at the sequence level or at the token level. At the sequence level, score centering is redundant once the group-mean baseline is used, regardless of whether the ratio is regularized (Proposition~\ref{prop:noop}). Consequently, the population mean of the sequence-level weighted score, which is intractable to compute, is not needed. At the token level, while score centering is far more computationally tractable, it is no longer optional. Without score centering, the group-mean estimator is biased for the token-level estimand; its expectation contains an extra term (Proposition~\ref{prop:tokcenter}) that vanishes under standard importance sampling but is nonzero in general when the ratio is regularized.

These results suggest the following practice. With sequence-level importance sampling, the group-mean baseline alone suffices, and score centering should be omitted. With token-level importance sampling and a regularized ratio, score centering should be applied, either together with the group-mean baseline or with a constant baseline; the latter has smaller variance only if the baseline is sufficiently close to $b^{\star}$, which is prompt-dependent and must be estimated from multiple rollouts. Finally, because both families of estimators degrade as the training--inference mismatch grows, it remains useful to reduce the mismatch at its source, e.g., through matched kernels, batch-invariant computation, and consistent arithmetic precision across the training and inference engines.

\bibliographystyle{apalike}
\bibliography{refs-sh}

\end{document}